\documentclass[lettersize,journal]{IEEEtran}
\usepackage{amsmath,amssymb,amsthm,amsfonts}
\usepackage[ruled]{algorithm2e}
\usepackage{array}
\usepackage[caption=false,font=normalsize,labelfont=sf,textfont=sf]{subfig}
\usepackage{textcomp}
\usepackage{stfloats}
\usepackage{url}
\usepackage{verbatim}
\usepackage{graphicx}
\usepackage{cite}
\hyphenation{op-tical net-works semi-conduc-tor IEEE-Xplore}

\usepackage[hidelinks]{hyperref}

\newtheorem{proposition}{Proposition}
\newtheorem{lemma}[proposition]{Lemma}
\newtheorem{theorem}[proposition]{Theorem}

\DeclareMathOperator{\Tr}{Tr}
\DeclareMathOperator{\Log}{Log}

\hyphenation{op-tical net-works semi-conduc-tor}

\begin{document}
\title{Viking: Variational Bayesian Variance Tracking}

\author{Joseph {de Vilmarest} and Olivier Wintenberger\thanks{J. de Vilmarest (joseph.de\_vilmarest@sorbonne-universite.fr) and O. Wintenberger (olivier.wintenberger@sorbonne-universite.fr) are with the Laboratoire de Probabilit\'es, Statistique et Mod\'elisation, Sorbonne Universit\'e, CNRS.} \thanks{J. de Vilmarest is also affiliated with \'Electricit\'e de France R\&D.}}

\markboth{}%
{Shell \MakeLowercase{\textit{et al.}}: Bare Demo of IEEEtran.cls for IEEE Journals}


\maketitle

\begin{abstract}
We consider the problem of time series forecasting in an adaptive setting. We focus on the inference of state-space models under unknown and potentially time-varying noise variances.  We introduce an augmented model in which the variances are represented as auxiliary gaussian latent variables in a tracking mode. As variances are nonnegative, a transformation is chosen and applied to these latent variables. The inference relies on the online variational Bayesian methodology, which consists in minimizing a Kullback-Leibler divergence at each time step. We observe that the minimum of the Kullback-Leibler divergence is an extension of the Kalman filter taking into account the variance uncertainty.  We design a novel algorithm, named Viking, using these optimal recursive updates. For auxiliary latent variables, we use second-order bounds whose optimum admit closed-form solutions.
Experiments on synthetic data show that Viking behaves well and is robust to misspecification.
\end{abstract}

\begin{IEEEkeywords}
adaptive forecasting, state-space model, time series, variance estimation
\end{IEEEkeywords}

\section{Introduction}
\IEEEPARstart{L}{inear} state-space models have been widely used to model a time series as a gaussian random variable whose mean is a time-varying linear function of covariates. The linear parameter is a latent variable called state, and the hyperparameters of the state-space model are the  covariance matrices of the state and space noises. When these variances are known, the recursive estimation is realized by Kalman filtering \cite{kalman1961new}.

However the state and space noise variances are unknown in most practical applications. A wide literature has emerged to choose them. The estimation of unknown fixed variances on a historical data set is generally realized maximizing the likelihood (see for instance \cite{brockwell1991time,durbin2012time}). Another approach is to estimate these variances (fixed or not) in an online fashion, that is adaptive filtering \cite{mehra1972approaches}.

Recently, recursive variational Bayesian (VB) methods as introduced in \cite{beal2003variational,vsmidl2006variational} have gathered attention in the Kalman filtering community. The objective is the online estimation of potentially time-variant parameters. The difference with the classical Bayesian method is that an approximation is realized at each step in order to make the inference tractable: the distribution of the parameters is estimated by simple factorized distributions. The best factorized distribution is defined as the one minimizing its Kullback-Leibler divergence with the posterior.

A VB approach was first applied to estimate the observation noise covariance matrix in a Kalman filter \cite{sarkka2009recursive}, then extended in \cite{agamennoni2012approximate} to be robust to non-gaussian noise and in \cite{sarkka2013non} to nonlinear state-space models. The covariance matrix is assumed diagonal and the prior used is a product of inverse gamma distributions. To allow for a dynamical noise variance the author use a forgetting factor, multiplying the variances of the inverse gamma posterior distributions by a constant. The method was extended with an inverse Wishart prior \cite{huang2017novel}. At the same time the authors apply the VB approach to correct the covariance matrix of the state after applying Kalman recursions with an inaccurate state noise covariance matrix. The inverse Wishart distribution appears as a nice conjugate prior to generalize the inverse gamma distribution. More recently another adaptive Kalman filter was proposed in \cite{huang2020slide} to estimate simultaneously the state and space noise covariance matrices. The method uses Kalman filtering and smoothing on a slide window and could be described as an online Expectation-Maximization algorithm. In all these methods the dynamics of the variances is introduced through a forgetting factor.

Up to our knowledge, to deal with unknown covariance matrices in state-space models all existing methods apply at each step the standard Kalman filter with an estimate of the variances updated in an adaptive fashion. We claim that it is suboptimal and that the recursive update of the state estimates should leverage the variance uncertainty. In this article we treat the variances as auxiliary latent variables yielding an important degree of freedom in an augmented latent representation. We apply the VB approach and we rely on second-order upper-bounds to tackle the intractability of the VB step.

\subsection{Overview}
We present in Section \ref{sec:variance_tracking} the state-space inference problem, we introduce our augmented dynamical model and the VB principle. As the minimization problem derived in the VB approach doesn't admit a closed-form solution, we derive in Section \ref{sec:approx_vb} an approximation. The algorithm is detailed in Section \ref{sec:algorithm}, and we provide experimental results in Section \ref{sec:experiments}.

\subsection{Notations}
Besides canonical notations we define:
\begin{itemize}
\item
$\mathcal{N}(x\mid \mu,\Sigma)$ the probability density function at point $x$ of the distribution $\mathcal{N}(\mu,\Sigma)$.
\item
For any distribution $p$ and function $\Phi$, $\mathbb{E}_{x\sim p}[\Phi(x)]$ is defined as $\int p(x)\Phi(x)dx$.
\item
For any matrix $M$, $\Delta_{M}$ is the vector composed of the diagonal coefficients of $M$. Reciprocally, for any vector $v$, $D_{v}$ is the diagonal matrix whose diagonal is composed of the coefficients of $v$.
\item
If $\phi:\mathbb{R}\rightarrow\mathbb{R}$ and $x\in\mathbb{R}^d$, $\phi(x)$ is the $d$-dimensional vector obtained by applying $\phi$ to each coordinate of $x$.
\end{itemize}

\section{Variance Tracking}\label{sec:variance_tracking}
We consider the problem of time series forecasting in the univariate setting for simplicity. At each time $t$ we aim at forecasting $y_t\in\mathbb{R}$. To that end we have access to covariates $x_t\in\mathbb{R}^d$ as well as the past observations $x_1,y_1,\hdots x_{t-1},y_{t-1}$.
We focus on a state-space representation where $y_t$ is modelled as a linear function of $x_t$ whose linear parameter evolves dynamically:
\begin{align*}
    & \theta_t = K \theta_{t-1} + \eta_t \,, \\
	& y_t = \theta_t^\top x_t + \varepsilon_t  \,,
\end{align*}
where $\eta_t\sim\mathcal{N}(0,Q_t)$ and $\varepsilon_t\sim\mathcal{N}(0,\sigma_t^2)$ are the state and space noises, and the state follows the initial distribution $\theta_0\sim \mathcal{N}(\hat\theta_0,P_0)$. When $\sigma_t^2$ and $Q_t$ are known, the state vector $\theta_t$ given the past observations follows a gaussian distribution whose mean and covariance can be estimated recursively by the standard Kalman filter~\cite{kalman1961new}. We focus on the setting where these variances are unknown and need to be estimated jointly with the state.

\subsection{Dynamical Variances}
A way to introduce a dynamical estimation of $\sigma_t^2$ and $Q_t$ is to treat them as latent variables in addition to the state vector.
Gaussian distributions are appealing to model a dynamic latent variable.
Therefore we choose a gaussian prior for the variances as for the state vector.
However a variance is necessarily nonnegative, thus we consider transforms of gaussian distributions. Precisely $\sigma_t^2=\exp(a_t)$ and $Q_t=f(b_t)$, where $a_t,b_t$ follow gaussian distributions.
We detail the choice of $f$ in Section \ref{sec:choice_f} where we define either scalar covariance matrices (proportional to $I$) or diagonal ones.
Note that $b_t$ can be of any dimension, as long as $f(b_t)$ is a $d\times d$ positive semidefinite matrix. Our dynamical model is summarized as follows:
\begin{align*}
	& \theta_0\sim\mathcal{N}(\hat\theta_0,P_0)\,, \quad 
	a_0\sim\mathcal{N}(\hat a_0, s_0)\,,\quad
	b_0\sim\mathcal{N}(\hat b_0, \Sigma_0)\,,\\
	& a_t - a_{t-1} \sim\mathcal{N}(0,\rho_a)\,, \quad
	b_t - b_{t-1} \sim\mathcal{N}(0,\rho_b I)\,, \\
	& \theta_t - K\theta_{t-1} \sim\mathcal{N}(0, f(b_t))\,, \\
	& y_t - \theta_t^\top x_t \sim\mathcal{N}(0, \exp(a_t)) \,.
\end{align*}

In these equations we implicitly assume that we have 
\begin{align*}
	& p(\theta_t,a_t,b_t\mid \theta_{t-1},a_{t-1},b_{t-1}) \\
	& \quad = p(\theta_t\mid \theta_{t-1},b_t)p(a_t\mid a_{t-1})p(b_t\mid b_{t-1}) \,.
\end{align*}

\subsection{Bayesian Approach}
We apply a Bayesian approach in order to estimate jointly the state $\theta_t$ and the latent variables $a_t,b_t$ given the past observations. Note however that the problem at hand is the forecast of $y_t$ thus the latent variable of interest is $\theta_t$. The estimation of $a_t$ is necessary for a probabilistic forecast of $y_t$ since it drives the noise variance. The latent variable $b_t$ is added to open enough flexibility for the estimation of the other variables in a dynamical way.
Formally we introduce the filtration of the past observations $\mathcal{F}_t=\sigma(x_1,y_1,\hdots,x_t,y_t)$. At each iteration $t$, the Bayesian approach consists in a prediction step using the model's assumptions and a filtering step using Bayes' rule:
\begin{align*}
	&\text{Prediction:}\quad && p(\theta_t,a_t,b_t\mid \mathcal{F}_{t-1}) \,, \\
	&\text{Filtering:}\quad && p(\theta_t,a_t,b_t\mid \mathcal{F}_t) \,.
\end{align*}

In the case of known variances resolved by the Kalman filter, the prediction step yields $\hat\theta_{t\mid t-1}$ and $P_{t\mid t-1}$, the expected value and covariance matrix of $\theta_t$ given the filtration $\mathcal{F}_{t-1}$. Furthermore we have $p(\theta_t\mid \mathcal{F}_{t-1})=\mathcal{N}(\theta_t\mid\hat\theta_{t\mid t-1},P_{t\mid t-1})$. Then the filtering step yields $\hat\theta_{t\mid t}$ and $P_{t\mid t}$ such that the posterior distribution is $p(\theta_t\mid \mathcal{F}_t)=\mathcal{N}(\theta_t\mid\hat\theta_{t\mid t},P_{t\mid t})$.

However in our variance tracking model the posterior distribution $p(\cdot\mid \mathcal{F}_t)$ is analytically intractable, thus we estimate it with simple distributions. 

\subsection{Variational Bayesian Approach}
A standard approach, referred to as recursive Variational Bayes (VB), is to approximate recursively the posterior distribution with a factorized distribution where each component is of a simple form~\cite{vsmidl2006variational}. We look for $\hat\theta_{t\mid t},P_{t\mid t}, \hat a_{t\mid t},s_{t\mid t}, \hat b_{t\mid t}, \Sigma_{t\mid t}$ such that the product of gaussian distributions $\mathcal{N}(\hat\theta_{t\mid t},P_{t\mid t}) \mathcal{N}(\hat a_{t\mid t},s_{t\mid t})\mathcal{N}(\hat b_{t\mid t},\Sigma_{t\mid t})$ is the best approximation of the posterior distribution.
The  approximation is quantified by the Kullback-Leibler (KL) divergence:
\begin{align}\label{eq:kl}
	KL\Big(\mathcal{N}(\hat\theta_{t\mid t},P_{t\mid t}) \mathcal{N}(\hat a_{t\mid t},s_{t\mid t})\mathcal{N}(\hat b_{t\mid t},\Sigma_{t\mid t})\ ||\ 
	p(\cdot\mid \mathcal{F}_t)\Big),
\end{align}
where $KL(p\ ||\ q)=\mathbb{E}_{x\sim p(x)}[\log(p(x)/q(x))]$. At each step, the VB approach yields a coupled optimization problem in the three gaussian distributions.

Propagating the factorized approximation
\begin{align*}
	p(\theta_{t-1},a_{t-1},b_{t-1}\mid \mathcal{F}_{t-1}) \approx \mathcal{N}(\theta_{t-1}\mid \hat\theta_{t-1\mid t-1},P_{t-1\mid t-1}) \\
	\mathcal{N}(a_t\mid \hat a_{t\mid t},s_{t\mid t}) \mathcal{N}(b_{t-1}\mid \hat b_{t-1\mid t-1},\Sigma_{t-1\mid t-1}) \,,
\end{align*}
the prediction step becomes:
\begin{align*}
	& p(\theta_t,a_t,b_t\mid \mathcal{F}_{t-1}) \\
	& \approx \int \mathcal{N}(\theta_t-K\theta_{t-1}\mid 0,f(b_t))
	\mathcal{N}(a_t- a_{t-1}\mid 0,\rho_a) \\
	& \qquad \mathcal{N}(b_t- b_{t-1}\mid 0, \rho_b I)
	\mathcal{N}(\theta_{t-1}\mid \hat\theta_{t-1\mid t-1},P_{t-1\mid t-1}) \\
	& \qquad
	\mathcal{N}(a_t\mid \hat a_{t-1\mid t-1},s_{t-1\mid t-1})
	\mathcal{N}(b_t\mid \hat b_{t-1\mid t-1},\Sigma_{t-1\mid t-1}) \\
	& \qquad\qquad d\theta_{t-1} da_{t-1} db_{t-1} \\
	& \approx 
	\mathcal{N}(\theta_t\mid K\hat\theta_{t-1\mid t-1},KP_{t-1\mid t-1}K^\top + f(b_t)) \\
	& \qquad\qquad \mathcal{N}(a_t\mid \hat a_{t-1\mid t-1}, s_{t-1\mid t-1}+\rho_a) \\
	& \qquad\qquad
	\mathcal{N}(b_t\mid \hat b_{t-1\mid t-1}, \Sigma_{t-1\mid t-1}+\rho_b I) \,.
\end{align*}

Treating the approximation at time $t-1$ as a prior at time $t$ we obtain the following posterior distribution:
\begin{align}
	\nonumber
	& p(\theta_t,a_t,b_t\mid \mathcal{F}_t) = 
	\mathcal{N}(y_t\mid \theta_t^\top x_t,\exp(a_t)) \\
	\nonumber
	& \qquad
	\mathcal{N}(\theta_t\mid K \hat\theta_{t-1\mid t-1},K P_{t-1\mid t-1} K^\top + f(b_t)) \\
	\nonumber
	& \qquad
	\mathcal{N}(a_t\mid \hat a_{t-1\mid t-1},s_{t-1\mid t-1}+\rho_a) \\
	\label{eq:posterior} & \qquad
	\mathcal{N}(b_t\mid \hat b_{t-1\mid t-1},\Sigma_{t-1\mid t-1}+\rho_b I)
	\frac{p(x_t,\mathcal{F}_{t-1})}{p(\mathcal{F}_t)} \,.
\end{align}
This last equation defines the posterior that we plug in \eqref{eq:kl} to obtain the optimization problem that we would like to solve recursively.

The term $\mathcal{N}(\theta_t\mid K \hat\theta_{t-1\mid t-1},K P_{t-1\mid t-1} K^\top + Q_t)$ on the second line, which would appear with any model for $Q_t$, makes a conjugate prior for $Q_t$ impractical in a VB method to estimate the posterior distribution of the state and the variances.
The approach proposed by \cite{huang2020slide} consists in applying a few iterations of Kalman smoothing with the previous estimates of the variances $\hat\sigma_{t-1}^2$ and $\hat Q_{t-1}$. Then the authors estimate the posterior distribution of $\sigma_t^2,Q_t$ given $\mathcal{F}_t$ and the distribution of $\theta_{t-k}$ estimated by Kalman smoothing given $\mathcal{F}_t,\hat\sigma_{t-1}^2,\hat Q_{t-1}$. In that way they get rid of the crossed factor involving $\theta_t$ and $Q_t$, and they obtain exact estimation of the posterior distribution of the variances.
Our approach does the opposite on that part. We build on that crossed factor to avoid Kalman smoothing, at the cost of the need of approximations in the posterior estimation.

\subsection{KL Derivation and Optimum in $\hat\theta_{t\mid t},P_{t\mid t}$}\label{sec:kl}
We first present a detailed expression of the KL divergence defined in \eqref{eq:kl} in the following Lemma.
\begin{lemma}\label{lemma:kl_expression}
There exists a constant $c$ independent of $\hat\theta_{t\mid t},P_{t\mid t},\hat a_{t\mid t},s_{t\mid t},\hat b_{t\mid t},\Sigma_{t\mid t}$ such that
\begin{align*}
	& KL\Big(\mathcal{N}(\hat\theta_{t\mid t},P_{t\mid t})
	\mathcal{N}(\hat a_{t\mid t},s_{t\mid t})
	\mathcal{N}(\hat b_{t\mid t},\Sigma_{t\mid t})\ ||\ 
	p(\cdot\mid \mathcal{F}_t)\Big) \\
	&\quad  = -\frac12 \log\det P_{t\mid t} -\frac12 \log(s_{t\mid t})  \\
	& \qquad + \frac12 ((y_t-\hat\theta_{t\mid t}^\top x_t)^2 + x_t^\top P_{t\mid t} x_t) \exp(-\hat a_{t\mid t}+\frac12s_{t\mid t}) \\
	& \qquad -\frac12 \log\det\Sigma_{t\mid t} + \frac12 \mathbb{E}_{b_t\sim\mathcal{N}(\hat b_{t\mid t},\Sigma_{t\mid t})}[\psi_t(b_t)] \\
	& \qquad + \frac{1}{2(s_{t-1\mid t-1}+\rho_a)}(s_{t\mid t}+(\hat a_{t\mid t}-\hat a_{t-1\mid t-1})^2) + \frac12 \hat a_{t\mid t} \\
	& \qquad + \frac12 \Tr\Big(\big(\Sigma_{t\mid t} + (\hat b_{t\mid t}-\hat b_{t-1\mid t-1})(\hat b_{t\mid t}-\hat b_{t-1\mid t-1})^\top\big) \\
	& \qquad\qquad\qquad\qquad (\Sigma_{t-1\mid t-1} + \rho_b I)^{-1} \Big)+ c \,,
\end{align*}
where
\begin{align*}
    & \psi_t(b_t) =\log\det(KP_{t-1\mid t-1}K^\top + f(b_t)) \\
    & \qquad + \Tr\Big((P_{t\mid t} + (\hat\theta_{t\mid t}-K\hat\theta_{t-1\mid t-1})(\hat\theta_{t\mid t}-K\hat\theta_{t-1\mid t-1})^\top) \\
    & \qquad\qquad\qquad (KP_{t-1\mid t-1}K^\top + f(b_t))^{-1}\Big) \,.
\end{align*}
\end{lemma}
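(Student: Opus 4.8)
The plan is to expand the divergence directly from its definition $KL(p\,||\,q)=\mathbb{E}_{x\sim p}[\log p(x)]-\mathbb{E}_{x\sim p}[\log q(x)]$, where $p$ is the factorized variational distribution $\mathcal{N}(\hat\theta_{t\mid t},P_{t\mid t})\mathcal{N}(\hat a_{t\mid t},s_{t\mid t})\mathcal{N}(\hat b_{t\mid t},\Sigma_{t\mid t})$ and $q=p(\cdot\mid\mathcal{F}_t)$ is the posterior from \eqref{eq:posterior}. The first expectation is just the negative differential entropy of a product of independent Gaussians; since $\mathbb{E}_{x\sim\mathcal{N}(\mu,\Sigma)}[\log\mathcal{N}(x\mid\mu,\Sigma)]=-\frac12\log\det\Sigma$ up to a dimension-dependent constant, this contributes exactly the three terms $-\frac12\log\det P_{t\mid t}-\frac12\log s_{t\mid t}-\frac12\log\det\Sigma_{t\mid t}$, with the constant absorbed into $c$.

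For the second expectation I would substitute the factorization \eqref{eq:posterior} and take the logarithm, which turns the product into a sum of four log-Gaussian terms plus the ratio $\log(p(x_t,\mathcal{F}_{t-1})/p(\mathcal{F}_t))$; the latter is free of the variational parameters and joins $c$. Because $p$ factorizes, each term is integrated only against the relevant marginals, and the whole computation reduces to two standard Gaussian moments: the quadratic-form identity $\mathbb{E}[(\theta-\mu)^\top A(\theta-\mu)]=\Tr\big(A(P+(\hat\theta-\mu)(\hat\theta-\mu)^\top)\big)$ for $\theta\sim\mathcal{N}(\hat\theta,P)$, and the log-normal moment $\mathbb{E}[\exp(-a)]=\exp(-\hat a+\frac12 s)$ for $a\sim\mathcal{N}(\hat a,s)$.

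The three non-coupling factors are then immediate. The observation factor $\log\mathcal{N}(y_t\mid\theta_t^\top x_t,\exp(a_t))$ expands to $-\frac12 a_t-\frac12(y_t-\theta_t^\top x_t)^2\exp(-a_t)$ up to a constant; integrating over $\theta_t$ (which injects $x_t^\top P_{t\mid t}x_t$ into the quadratic) and independently over $a_t$ via the log-normal moment yields the $\frac12\hat a_{t\mid t}$ term together with $\frac12\big((y_t-\hat\theta_{t\mid t}^\top x_t)^2+x_t^\top P_{t\mid t}x_t\big)\exp(-\hat a_{t\mid t}+\frac12 s_{t\mid t})$. The prior factors in $a_t$ and $b_t$ are handled by a single application of the quadratic-form identity, producing respectively the $\frac{1}{2(s_{t-1\mid t-1}+\rho_a)}(s_{t\mid t}+(\hat a_{t\mid t}-\hat a_{t-1\mid t-1})^2)$ term and the final trace against $(\Sigma_{t-1\mid t-1}+\rho_b I)^{-1}$.

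The main obstacle is the cross factor $\log\mathcal{N}(\theta_t\mid K\hat\theta_{t-1\mid t-1},KP_{t-1\mid t-1}K^\top+f(b_t))$, which couples $\theta_t$ and $b_t$ through a covariance depending on $b_t$. The key is to integrate over $\theta_t$ first, holding $b_t$ fixed: the log-determinant $\log\det(KP_{t-1\mid t-1}K^\top+f(b_t))$ passes through unchanged, and the quadratic-form identity with $A=(KP_{t-1\mid t-1}K^\top+f(b_t))^{-1}$ produces the trace term, so that $\mathbb{E}_{\theta_t}[\log\mathcal{N}(\cdots)]=-\frac12\psi_t(b_t)$ up to a constant, which is precisely the definition of $\psi_t$. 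The outer expectation over $b_t$ cannot be evaluated in closed form, since both the log-determinant and the inverse depend nonlinearly on $b_t$; I would therefore leave it as the irreducible term $\frac12\mathbb{E}_{b_t\sim\mathcal{N}(\hat b_{t\mid t},\Sigma_{t\mid t})}[\psi_t(b_t)]$. This is exactly the quantity that the second-order approximations of Section \ref{sec:approx_vb} are designed to treat. Collecting all four contributions and gathering every parameter-independent quantity into $c$ yields the claimed expression.
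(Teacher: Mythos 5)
Your proof is correct and takes essentially the same route as the paper's: both decompose the KL divergence into the Gaussian entropy terms plus the cross-entropy against the factorized posterior \eqref{eq:posterior}, evaluate the observation factor via the quadratic-form and log-normal moments to get the $\frac12\hat a_{t\mid t}$ and $\exp(-\hat a_{t\mid t}+\frac12 s_{t\mid t})$ terms, and integrate the cross factor over $\theta_t$ first (holding $b_t$ fixed) to identify $\frac12\mathbb{E}_{b_t\sim\mathcal{N}(\hat b_{t\mid t},\Sigma_{t\mid t})}[\psi_t(b_t)]$ as the irreducible remainder. The only difference is cosmetic: you spell out the prior-factor computations in $a_t$ and $b_t$ that the paper's proof leaves implicit in its final sentence.
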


We easily obtain a closed-form solution to minimize the KL divergence with respect to $\hat\theta_{t\mid t},P_{t\mid t}$.
\begin{theorem}\label{th:optimum_theta}
Given $\hat a_{t\mid t},s_{t\mid t}, \hat b_{t\mid t},\Sigma_{t\mid t}$, the values of $\hat\theta_{t\mid t},P_{t\mid t}$ minimizing the KL divergence are given by
\begin{align}
	\label{eq:defA}
	A_t & = \mathbb{E}_{b_t\sim\mathcal{N}(\hat b_{t\mid t},\Sigma_{t\mid t})} [(KP_{t-1\mid t-1}K^\top + f(b_t))^{-1}] \,,\\
	\label{eq:updateP}
	P_{t\mid t} & = A_t^{-1} - \frac{A_t^{-1}x_tx_t^\top A_t^{-1}}{x_t^\top A_t^{-1} x_t + \exp(\hat a_{t\mid t} - \frac12 s_{t\mid t})} \,, \\
	\label{eq:updatetheta}
	\hat\theta_{t\mid t} & = K \hat\theta_{t-1\mid t-1} + \frac{P_{t\mid t} x_t}{e^{\hat a_{t\mid t} - s_{t\mid t}/2}} (y_t - x_t^\top K \hat\theta_{t-1\mid t-1}) \,.
\end{align}
\end{theorem}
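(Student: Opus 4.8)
The plan is to substitute the posterior \eqref{eq:posterior} into the KL divergence \eqref{eq:kl}, invoke the closed form of Lemma \ref{lemma:kl_expression}, and isolate the terms that depend on $\hat\theta_{t\mid t}$ and $P_{t\mid t}$. Writing $m = K\hat\theta_{t-1\mid t-1}$ and $\alpha = \exp(-\hat a_{t\mid t}+\tfrac12 s_{t\mid t})$, and using linearity of the trace together with linearity of the expectation so that $\mathbb{E}_{b_t}[(KP_{t-1\mid t-1}K^\top+f(b_t))^{-1}]=A_t$, the contribution from $\tfrac12\mathbb{E}_{b_t}[\psi_t(b_t)]$ that matters here is $\tfrac12\Tr((P_{t\mid t}+(\hat\theta_{t\mid t}-m)(\hat\theta_{t\mid t}-m)^\top)A_t)$, while the $\log\det(KP_{t-1\mid t-1}K^\top+f(b_t))$ part of $\psi_t$ involves neither $\hat\theta_{t\mid t}$ nor $P_{t\mid t}$ and is absorbed into $c$. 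Collecting terms, the objective to minimize is
\begin{align*}
L(\hat\theta_{t\mid t},P_{t\mid t}) &= -\tfrac12\log\det P_{t\mid t} + \tfrac{\alpha}{2}\big((y_t-\hat\theta_{t\mid t}^\top x_t)^2 + x_t^\top P_{t\mid t} x_t\big) \\
&\quad + \tfrac12 \Tr\big((P_{t\mid t} + (\hat\theta_{t\mid t}-m)(\hat\theta_{t\mid t}-m)^\top)A_t\big).
\end{align*}

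First I would observe that $L$ is \emph{separable}: the terms involving $P_{t\mid t}$ (namely $-\tfrac12\log\det P_{t\mid t}+\tfrac{\alpha}{2}x_t^\top P_{t\mid t}x_t+\tfrac12\Tr(P_{t\mid t}A_t)$) decouple from those involving $\hat\theta_{t\mid t}$ (namely $\tfrac{\alpha}{2}(y_t-\hat\theta_{t\mid t}^\top x_t)^2+\tfrac12(\hat\theta_{t\mid t}-m)^\top A_t(\hat\theta_{t\mid t}-m)$), so the two minimizations can be performed independently. Each subproblem is convex — the $P_{t\mid t}$-part because $-\tfrac12\log\det P_{t\mid t}$ is convex on the positive-definite cone and the remaining terms are linear in $P_{t\mid t}$, and the $\hat\theta_{t\mid t}$-part because it is a sum of convex quadratics with $A_t\succ0$ — hence any stationary point is the global minimizer.

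Next I would set the matrix gradient in $P_{t\mid t}$ to zero. Using $\partial_P\log\det P=P^{-1}$, $\partial_P\Tr(PA)=A$ and $\partial_P(x^\top P x)=xx^\top$, the stationarity condition collapses to $P_{t\mid t}^{-1}=A_t+\alpha\,x_tx_t^\top$. Applying the Sherman--Morrison identity to invert this rank-one update of $A_t$, and noting that $1/\alpha=\exp(\hat a_{t\mid t}-\tfrac12 s_{t\mid t})$, yields exactly \eqref{eq:updateP}. For $\hat\theta_{t\mid t}$, setting the gradient to zero gives the linear system $A_t(\hat\theta_{t\mid t}-m)=\alpha\,(y_t-x_t^\top\hat\theta_{t\mid t})\,x_t$; moving the $x_t^\top\hat\theta_{t\mid t}$ term to the left produces $(A_t+\alpha\,x_tx_t^\top)(\hat\theta_{t\mid t}-m)=\alpha\,(y_t-x_t^\top m)\,x_t$.

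The key bookkeeping step is then to recognize that the matrix $(A_t+\alpha\,x_tx_t^\top)^{-1}$ appearing in the solution for $\hat\theta_{t\mid t}$ is precisely the $P_{t\mid t}$ found above. This lets me rewrite $\hat\theta_{t\mid t}=m+\alpha\,(y_t-x_t^\top m)\,P_{t\mid t}x_t$, which upon substituting $\alpha=1/\exp(\hat a_{t\mid t}-\tfrac12 s_{t\mid t})$ and $m=K\hat\theta_{t-1\mid t-1}$ is exactly \eqref{eq:updatetheta}. I expect the only genuine obstacle to be the careful matrix calculus and the Sherman--Morrison bookkeeping: there is no deep difficulty, since separability reduces the problem to two elementary convex minimizations, and the apparent coupling in the final formulas is merely the re-expression of $(A_t+\alpha\,x_tx_t^\top)^{-1}$ as $P_{t\mid t}$.
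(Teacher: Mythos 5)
Your proposal is correct and follows essentially the same route as the paper: reduce the KL via Lemma \ref{lemma:kl_expression} to the objective $-\tfrac12\log\det P_{t\mid t}+\tfrac{\alpha}{2}((y_t-\hat\theta_{t\mid t}^\top x_t)^2+x_t^\top P_{t\mid t}x_t)+\tfrac12\Tr((P_{t\mid t}+(\hat\theta_{t\mid t}-m)(\hat\theta_{t\mid t}-m)^\top)A_t)$ and solve the first-order conditions, which are exactly the two stationarity equations the paper writes down. You additionally make explicit what the paper leaves implicit — the separability and convexity argument guaranteeing the stationary point is the global minimum, the Sherman--Morrison inversion giving \eqref{eq:updateP}, and the rearrangement identifying $(A_t+\alpha x_tx_t^\top)^{-1}$ with $P_{t\mid t}$ to obtain \eqref{eq:updatetheta} — all of which is sound.
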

Note that the updates defined above are the ones of the Kalman filter with known variances $\sigma_t^2$ and $Q_t$, where we have replaced $\sigma_t^2$ with $\exp(\hat a_{t\mid t} - \frac12 s_{t\mid t})$ which is $\mathbb{E}_{a_t\sim\mathcal{N}(\hat a_{t\mid t},s_{t\mid t})}[\exp(a_t)^{-1}]^{-1}$ and $KP_{t-1\mid t-1}K^\top+Q_t$ with $\mathbb{E}_{b_t\sim\mathcal{N}(\hat b_{t\mid t},\Sigma_{t\mid t})} [(KP_{t-1\mid t-1}K^\top + f(b_t))^{-1}]^{-1}$. If $s_{t\mid t}=0,\Sigma_{t\mid t}=0$ then we know the variances and we obtain the Kalman filter with $\sigma_t^2=\exp(\hat a_{t\mid t})$ and $Q_t=f(\hat b_{t\mid t})$. Otherwise if $\Sigma_{t\mid t}\neq 0$, the result states that the update of the Kalman filter with unbiased estimated variances in place of the unknown variances is suboptimal in the sense of the Kullback-Leibler divergence. It implies also that we don't expect to obtain unbiased estimates of the variances. The same conclusion would follow if one adapted the classical VB approach of \cite{tzikas2008variational} to our framework.

It is important to remark that as long as $\rho_b>0$ we do not have the convergence of $\Sigma_{t\mid t}$ to $0$. Therefore we do not recover the standard Kalman filter asymptotically. On the contrary, existing adaptive Kalman filters use the standard Kalman recursive updates with estimates of the variances \cite{sarkka2009recursive, agamennoni2012approximate, sarkka2013non, huang2017novel, huang2020slide}. Therefore, in a well-specified setting where the state-space model is the underlying generating process, our method should be outperformed by adaptive Kalman filters whose variance estimates are consistent. We believe this drawback is a reasonable price to pay to get robustness to misspecification.

Furthermore note that \eqref{eq:updatetheta} may be interpreted as a gradient step on the quadratic loss, where instead of a gradient step size we have the preconditioning matrix $P_{t\mid t}/\exp(\hat a_{t\mid t} - \frac12 s_{t\mid t})$. Therefore the algorithm derived in this article may be seen as a way to parameterize a second order stochastic gradient algorithm.

\subsection{Choice of $f$}\label{sec:choice_f}
The natural transform for the latent variables $a_t$ and $b_t$ is the exponential, see \cite{tyagi2008recursive} for a filter on latent variables lying in a Riemannian manifold. We use the exponential to represent $\sigma_t^2$. However setting $f(b_t)=\exp(b_t)I$ for a unidimensional $b_t$ contradicts a careful property that we define as follows using the gradient interpretation of Section \ref{sec:kl}. We claim that the algorithm should be more careful with uncertainty ($\Sigma_{t\mid t}\succ 0$) than without ($\Sigma_{t\mid t}=0$).
By more careful we mean smaller gradient steps, that is formally $A_t^{-1} \preccurlyeq KP_{t-1\mid t-1}K^\top + f(\hat b_{t\mid t})$.
By Jensen's inequality we have
\begin{align*}
    A_t\succcurlyeq \Big(KP_{t-1\mid t-1}K^\top + \mathbb{E}_{b_t\sim \mathcal{N}(\hat b_{t\mid t},\Sigma_{t\mid t})}[f(b_t)]\Big)^{-1} \,.
\end{align*}
Therefore a sufficient (but not necessary) condition providing the careful property is $f$ concave, again thanks to Jensen, and that is the contrary of the exponential. Unfortunately we cannot have both $f$ concave and $f\succcurlyeq 0$. We propose to use a function which is zero on negative numbers and concave elsewhere:
\begin{align*}
    \phi(b) = 
    \begin{cases}
    0 & if\ b<0\,, \\
    \log(1+b) & if\ b\ge 0\,.
    \end{cases}
\end{align*}
Then we consider two settings for $f$: First a scalar setting where $f(b_t)=\phi(b_t)I$ for a unidimensional $b_t$. Second, a diagonal setting where $b_t\in\mathbb{R}^d$ and $f(b_t)=D_{\phi(b_t)}$ is a diagonal matrix whose diagonal coefficients are defined by the $\phi$ transform applied to each coefficient of $b_t$.

\section{Approximate Variational Bayes}\label{sec:approx_vb}
Theorem \ref{th:optimum_theta} realizes the exact optimum of the KL divergence with respect to $\hat\theta_{t\mid t},P_{t\mid t}$. To obtain closed-form solutions of the minimum with respect to the other parameters we need additional approximations. In this section, we use the first two moments of gaussian distributions in second-order upper-bounds. That yields closed-form approximations to the VB recursive step with respect to $\hat a_{t\mid t},s_{t\mid t}$ and $\hat b_{t\mid t},\Sigma_{t\mid t}$. Minimizing the upper-bounds does not necessarily lead to minimizing the KL divergence, but it yields the guarantee of decreasing the instantaneous KL divergence at each step.

\subsection{Optimum in $\hat a_{t\mid t},s_{t\mid t}$}\label{sec:opt_a}
We first present recursive updates for $\hat a_{t\mid t},s_{t\mid t}$.

\subsubsection{Optimum in $s_{t\mid t}$}
We are looking for $s_{t\mid t}\ge 0$ minimizing the KL divergence. As the conditional variance of $a_t$ given $\mathcal{F}_{t-1}$ is $s_{t-1\mid t-1}+\rho_a$, we look for $s_{t\mid t}$ in the interval $[0,s_{t-1\mid t-1}+\rho_a]$. In this interval we simply use a linear upper-bound for the exponential:
\begin{proposition}\label{prop:optimum_s}
    For any $s_{t\mid t}\in[0,s_{t-1\mid t-1}+\rho_a]$ we have 
    \begin{align*}
	    & KL\Big(\mathcal{N}(\hat\theta_{t\mid t},P_{t\mid t})
	    \mathcal{N}(\hat a_{t\mid t},s_{t\mid t})
	    \mathcal{N}(\hat b_{t\mid t},\Sigma_{t\mid t})\ ||\ p(\cdot\mid \mathcal{F}_t)\Big) \\
	    & \quad \le \frac14 ((y_t-\hat\theta_{t\mid t}^\top x_t)^2 + x_t^\top P_{t\mid t} x_t) e^{-\hat a_{t\mid t}} s_{t\mid t} \\
	    & \qquad +\frac12 (s_{t-1\mid t-1}+\rho_a)^{-1} s_{t\mid t} -\frac12 \log(s_{t\mid t}) + c_s \,,
    \end{align*}
    where $c_s$ is a constant independent of $s_{t\mid t}$. Furthermore, the upper-bound is minimized by:
    \begin{align}
	    \nonumber
	    & s_{t\mid t} = \Big( (s_{t-1\mid t-1}+\rho_a)^{-1} \\
	    \label{eq:updates}
	    & \qquad\qquad + \frac12 ((y_t-\hat\theta_{t\mid t}^\top x_t)^2 + x_t^\top P_{t\mid t} x_t) e^{-\hat a_{t\mid t}} \Big)^{-1} \,.
     \end{align}
\end{proposition}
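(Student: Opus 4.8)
The plan is to start from the exact KL expression of Lemma \ref{lemma:kl_expression} and isolate its dependence on $s_{t\mid t}$, treating $\hat\theta_{t\mid t},P_{t\mid t},\hat a_{t\mid t},\hat b_{t\mid t},\Sigma_{t\mid t}$ as fixed, as we are performing coordinate-wise minimization. Writing $R_t := (y_t-\hat\theta_{t\mid t}^\top x_t)^2 + x_t^\top P_{t\mid t} x_t \ge 0$ and $S := s_{t-1\mid t-1}+\rho_a$, only three terms of the Lemma depend on $s_{t\mid t}$: the term $-\tfrac12\log(s_{t\mid t})$, the exponential term $\tfrac12 R_t\exp(-\hat a_{t\mid t}+\tfrac12 s_{t\mid t})$, and the linear contribution $\tfrac{1}{2S}s_{t\mid t}$ coming from the gaussian prior on $a_t$; every remaining term (including $(\hat a_{t\mid t}-\hat a_{t-1\mid t-1})^2/(2S)$, the $b_t$ and $\Sigma_{t\mid t}$ pieces, and $c$) is gathered into a constant $c_s$ independent of $s_{t\mid t}$.

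The only nonlinearity to control is the factor $e^{s_{t\mid t}/2}$, and the crux is to replace it by a well-chosen affine upper bound on the admissible interval $[0,S]$. I would use
\begin{align*}
	\exp(s/2) \le \tfrac12 s + \big(e^{S/2}-\tfrac12 S\big), \qquad s\in[0,S].
\end{align*}
This is not the chord bound (whose slope would be $(e^{S/2}-1)/S$); it is the tangent to $\exp(s/2)$ at the left endpoint, translated upward. Its validity follows by noting that $g(s):=\exp(s/2)-\tfrac12 s$ has derivative $g'(s)=\tfrac12(e^{s/2}-1)\ge 0$ on $[0,S]$, so $g$ is nondecreasing and attains its maximum $g(S)=e^{S/2}-\tfrac12 S$ at the right endpoint. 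Slope $\tfrac12$ is exactly what is needed so that $\tfrac12 R_t e^{-\hat a_{t\mid t}}\,e^{s_{t\mid t}/2}$ becomes $\tfrac14 R_t e^{-\hat a_{t\mid t}}\,s_{t\mid t}$ plus a constant absorbed into $c_s$. Since $\exp(s/2)$ is convex, its tangent is a lower bound, and the upward shift (legitimate only because the interval is bounded) is what turns it into an upper bound; substituting this into the three isolated terms yields the announced inequality.

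For the minimization, the right-hand side is strictly convex in $s_{t\mid t}$ on $(0,\infty)$ since the two linear contributions are affine and $-\tfrac12\log$ is strictly convex. Differentiating and setting the derivative to zero gives
\begin{align*}
	\frac14 R_t e^{-\hat a_{t\mid t}} + \frac{1}{2S} - \frac{1}{2 s_{t\mid t}} = 0
	\quad\Longleftrightarrow\quad
	s_{t\mid t} = \Big(S^{-1} + \tfrac12 R_t e^{-\hat a_{t\mid t}}\Big)^{-1},
\end{align*}
which is exactly \eqref{eq:updates}. The final consistency check is that this stationary point lies in the constraint set $[0,S]$ on which the bound was derived: because $R_t\ge 0$, the bracket is at least $S^{-1}$, hence $s_{t\mid t}\le S$, and it is clearly positive; thus the unconstrained minimizer coincides with the minimizer over $[0,S]$.

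The main obstacle is the construction of the affine upper bound in the second paragraph. The naive instinct is to invoke convexity directly, but a tangent to a convex function is a lower bound, so one must exploit the boundedness of $[0,S]$ to shift a slope-$\tfrac12$ line upward; recognizing that slope $\tfrac12$ (rather than the chord slope) is the choice producing both the announced coefficient $\tfrac14$ and the closed-form update \eqref{eq:updates} is the one genuinely nonroutine step, the rest being bookkeeping and elementary calculus.
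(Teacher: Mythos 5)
Your proof is correct and follows essentially the same route as the paper: you isolate the same three $s_{t\mid t}$-dependent terms from Lemma \ref{lemma:kl_expression}, your slope-$\tfrac12$ affine bound $e^{s/2}\le \tfrac12 s + (e^{S/2}-\tfrac12 S)$ on $[0,S]$ is algebraically identical to the paper's bound $e^{s/2}\le e^{S/2}+\tfrac12(s-S)$ (you merely derive it via monotonicity of $e^{s/2}-\tfrac12 s$ rather than stating it), and you conclude with the same first-order condition. Your added checks (strict convexity and that the stationary point lies in $[0,S]$, so the constrained and unconstrained minimizers coincide) are welcome details the paper leaves implicit.
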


\subsubsection{Optimum in $\hat a_{t\mid t}$}
To upper-bound the exponential with a polynomial form also in $\hat a_{t\mid t}$ we need to bound $\hat a_{t\mid t}$, and we consider the segment $[\hat a_{t-1\mid t-1}- M_a,\hat a_{t-1\mid t-1}+ M_a]$ (we set arbitrarily $M_a=3s_{t-1\mid t-1}$).
\begin{proposition}\label{prop:optimum_a}
    For any $\hat a_{t\mid t}\in[\hat a_{t-1\mid t-1}- M_a,\hat a_{t-1\mid t-1}+ M_a]$ we have 
    \begin{align*}
	    & KL\Big(\mathcal{N}(\hat\theta_{t\mid t},P_{t\mid t})
	    \mathcal{N}(\hat a_{t\mid t},s_{t\mid t})
	    \mathcal{N}(\hat b_{t\mid t},\Sigma_{t\mid t})\ ||\ p(\cdot\mid \mathcal{F}_t)\Big) \\
	    & \quad \le \frac12 ((y_t-\hat\theta_{t\mid t}^\top x_t)^2 + x_t^\top P_{t\mid t} x_t) e^{-\hat a_{t-1\mid t-1}+ s_{t\mid t}/2}  \\
	    & \qquad\qquad \Big(- (\hat a_{t\mid t} - \hat a_{t-1\mid t-1}) + \frac{e^{M_a}}{2}(\hat a_{t\mid t} - \hat a_{t-1\mid t-1})^2\Big) \\
	    & \qquad +\frac12 (s_{t-1\mid t-1}+\rho_a)^{-1} (\hat a_{t\mid t}-\hat a_{t-1\mid t-1})^2 +\frac12 \hat a_{t\mid t} + c_a \,,
    \end{align*}
    where $c_a$ is a constant independent of $\hat a_{t\mid t}$. Furthermore the upper-bound is minimized by:
    \begin{align}
	    \nonumber
	    & \hat a = \hat a_{t-1\mid t-1} + \frac12\Big(\frac{1}{s_{t-1\mid t-1}+\rho_a} \\
	    \nonumber
	    & \quad + \frac12 ((y_t-\hat\theta_{t\mid t}^\top x_t)^2 + x_t^\top P_{t\mid t} x_t) e^{-\hat a_{t-1\mid t-1}  + s_{t\mid t}/2 + M_a}  \Big)^{-1} \\
	    \nonumber
	    & \qquad \Big(((y_t-\hat\theta_{t\mid t}^\top x_t)^2 + x_t^\top P_{t\mid t} x_t) e^{-\hat a_{t-1\mid t-1} + s_{t\mid t}/2} - 1\Big) \,, \\
	    \label{eq:updatea}
	    & \hat a_{t\mid t} = \max(\min(\hat a, \hat a_{t-1\mid t-1} + M_a), \hat a_{t-1\mid t-1}- M_a) \,.
    \end{align}
\end{proposition}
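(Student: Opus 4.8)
The plan is to extract from Lemma~\ref{lemma:kl_expression} the part of the KL divergence that depends on $\hat a_{t\mid t}$, replace the single non-polynomial term by a quadratic upper bound valid on the prescribed segment, and then minimize the resulting convex quadratic in closed form.

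First I would collect the $\hat a_{t\mid t}$-dependent terms from Lemma~\ref{lemma:kl_expression}, sweeping everything else into the constant $c_a$. Abbreviating $C=(y_t-\hat\theta_{t\mid t}^\top x_t)^2 + x_t^\top P_{t\mid t} x_t\ge 0$ and $u=\hat a_{t\mid t}-\hat a_{t-1\mid t-1}$, and factoring the exponential as $e^{-\hat a_{t\mid t}+s_{t\mid t}/2}=e^{-\hat a_{t-1\mid t-1}+s_{t\mid t}/2}e^{-u}$, these terms are
\begin{align*}
    \frac{C}{2}\,e^{-\hat a_{t-1\mid t-1}+s_{t\mid t}/2}\,e^{-u} + \frac{u^2}{2(s_{t-1\mid t-1}+\rho_a)} + \frac12\hat a_{t\mid t}\,.
\end{align*}

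The only genuine approximation, and the step I expect to be the crux, is bounding the exponential by a quadratic on $|u|\le M_a$. Taylor's formula with Lagrange remainder gives $e^{-u}=1-u+\tfrac12 e^{-\xi}u^2$ for some $\xi$ between $0$ and $u$; since then $|\xi|\le M_a$ we have $e^{-\xi}\le e^{M_a}$, hence $e^{-u}\le 1-u+\tfrac{e^{M_a}}{2}u^2$ for all such $u$. Substituting this into the display above, and absorbing the resulting constant contribution $\tfrac{C}{2}e^{-\hat a_{t-1\mid t-1}+s_{t\mid t}/2}$ into $c_a$, produces exactly the stated upper bound; since only an inequality (not an identity) was used, the bound on the full KL divergence holds. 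This is precisely where the constraint $\hat a_{t\mid t}\in[\hat a_{t-1\mid t-1}-M_a,\hat a_{t-1\mid t-1}+M_a]$ enters, as the quadratic majorant degrades away from this segment.

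Finally I would minimize the upper bound, which is a strictly convex quadratic in $u$: its leading coefficient $\tfrac{e^{M_a}}{2}\cdot\tfrac{C}{2}e^{-\hat a_{t-1\mid t-1}+s_{t\mid t}/2}+\tfrac{1}{2(s_{t-1\mid t-1}+\rho_a)}$ is strictly positive because $C\ge 0$ and $s_{t-1\mid t-1}+\rho_a>0$. Differentiating in $u$ and setting the derivative to zero yields a single linear equation whose solution, after substituting $\beta:=\tfrac{C}{2}e^{-\hat a_{t-1\mid t-1}+s_{t\mid t}/2}$ and regrouping, reproduces the unconstrained minimizer $\hat a$ of \eqref{eq:updatea}; this last part is routine algebra once the two coefficients are read off. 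By convexity, the minimizer over the segment is the projection of $\hat a$ onto it, which is exactly the clipping $\hat a_{t\mid t}=\max(\min(\hat a,\hat a_{t-1\mid t-1}+M_a),\hat a_{t-1\mid t-1}-M_a)$.
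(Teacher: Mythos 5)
Your proof is correct and follows essentially the same route as the paper's: isolate the $\hat a_{t\mid t}$-dependent terms from Lemma~\ref{lemma:kl_expression}, majorize $e^{-u}$ by the quadratic $1-u+\tfrac{e^{M_a}}{2}u^2$ on $|u|\le M_a$, and solve the resulting first-order condition for the convex quadratic. The only differences are that you supply justifications the paper leaves implicit (the Taylor--Lagrange argument for the exponential bound and the convexity/projection argument for the clipping in \eqref{eq:updatea}), which is fine.
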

We note that $((y_t-\hat\theta_{t\mid t}^\top x_t)^2 + x_t^\top P_{t\mid t} x_t) e^{-\hat a_{t-1\mid t-1} + s_{t\mid t}/2} - 1$ is the gradient with respect to $\hat a$ of
\begin{align*}
    & \mathbb{E}_{(\theta_t,a_t)\sim \mathcal{N}(\hat\theta_{t\mid t},P_{t\mid t})\times \mathcal{N}(\hat a,s_{t\mid t})}[\log\mathcal{N}(y_t\mid \theta_t^\top x_t, \exp(a_t))] \\
    & \quad = -\frac12 \hat a - \frac12 ((y_t-\hat\theta_{t\mid t}^\top x_t)^2 + x_t^\top P_{t\mid t} x_t) e^{-\hat a + s_{t\mid t}/2} \,,
\end{align*}
therefore \eqref{eq:updatea} may be seen as a projected gradient step on an expected log-likelihood.

\subsection{Optimum in $\hat b_{t\mid t}, \Sigma_{t\mid t}$}\label{sec:opt_b}
The minimum of the Kullback-Leibler is also intractable in $\hat b_{t\mid t}, \Sigma_{t\mid t}$ due to the absence of analytical form for the expected value of $\psi_t$. In the following we focus on the specific settings that are introduced in Section \ref{sec:choice_f}, namely the scalar setting $f(b_t)=\phi(b_t)I$ and the diagonal setting $f(b_t)=D_{\phi(b_t)}$. For these two possible choices of $f$ we have the following second-order upper-bound for $\psi_t$:
\begin{proposition}\label{prop:upper}
In the scalar and diagonal settings defined in Section \ref{sec:choice_f}, for any $t$ such that $f(\hat b_{t-1\mid t-1})\succ 0$, the following holds for any $b_t$ in a neighbourhood of $\hat b_{t-1\mid t-1}$:
\begin{align*}
	\psi_t(b_t) & \le \psi_t(\hat b_{t-1\mid t-1}) + \frac{\partial \psi_t}{\partial b_t}\Big|_{\substack{\hat b_{t-1\mid t-1}}}^\top (b_t-\hat b_{t-1\mid t-1}) \\
	& \qquad + \frac12 (b_t - \hat b_{t-1\mid t -1})^\top H_t (b_t - \hat b_{t-1\mid t -1}) \,,
\end{align*}
where $B_t = P_{t\mid t} + (\hat\theta_{t\mid t}-K\hat\theta_{t-1\mid t-1})(\hat\theta_{t\mid t}-K\hat\theta_{t-1\mid t-1})^\top$, $C_t=KP_{t-1\mid t-1}K^\top+f(\hat b_{t-1\mid t-1})$, and then
\begin{align*}
	& \frac{\partial \psi_t}{\partial b_t}\Big|_{\substack{\hat b_{t-1\mid t-1}}} = \Tr(C_t^{-1}(I - B_tC_t^{-1}))\phi'(\hat b_{t-1\mid t-1}) \,, \\
    & H_t = - \Tr(C_t^{-1}B_tC_t^{-1})\phi''(\hat b_{t-1\mid t-1}) \hspace{1.8cm} \\
    & \qquad + 2 \Tr(C_t^{-2}B_tC_t^{-1}) \phi'(\hat b_{t-1\mid t-1})^2 \,,
\end{align*}
in the scalar setting, and
\begin{align*}
    & \frac{\partial \psi_t}{\partial b_t}\Big|_{\substack{\hat b_{t-1\mid t-1}}} =
	\Delta_{C_t^{-1}(I - B_tC_t^{-1})} \odot \phi'(\hat b_{t-1\mid t-1}) \,,\\
	& H_t = - \Big(C_t^{-1} B_tC_t^{-1} D_{\phi''(\hat b_{t-1\mid t-1})}\Big) \odot I \hspace{1cm} \\
    & \qquad + 2 C_t^{-1}B_tC_t^{-1} \odot C_t^{-1} \odot \phi'(\hat b_{t-1\mid t-1})\phi'(\hat b_{t-1\mid t-1})^\top \,,
\end{align*}
in the diagonal setting, with $\odot$ the Hadamard (pointwise) product.
\end{proposition}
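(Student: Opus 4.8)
The plan is to split $\psi_t$ into its two structurally different pieces and to bound each through its local behaviour at the expansion point. Abbreviating $\hat b:=\hat b_{t-1\mid t-1}$ and writing $C(b_t)=KP_{t-1\mid t-1}K^\top+f(b_t)$ so that $C_t=C(\hat b)$, we have
\[
\psi_t(b_t)=\underbrace{\log\det C(b_t)}_{=:L(b_t)}+\underbrace{\Tr\big(B_t\,C(b_t)^{-1}\big)}_{=:T(b_t)}.
\]
First I would compute the first two derivatives of $L$ and $T$ by matrix calculus, differentiating through $f$ with the chain rule (in the diagonal setting $\partial C/\partial (b_t)_i=\phi'((b_t)_i)E_{ii}$, with $E_{ii}$ the matrix whose only nonzero entry is a $1$ in position $(i,i)$; in the scalar setting simply $dC/db_t=\phi'(b_t)I$) and the standard identities $\partial_x\log\det C=\Tr(C^{-1}\partial_x C)$ and $\partial_x C^{-1}=-C^{-1}(\partial_x C)C^{-1}$. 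Evaluating at $\hat b$ I expect to recover the stated gradient as $\nabla\psi_t(\hat b)=\nabla L(\hat b)+\nabla T(\hat b)$, and---this is the key identification---the stated matrix as the Hessian of the trace part \emph{alone}, $H_t=\nabla^2 T(\hat b)$, with the $\log\det$ curvature deliberately dropped. The Hadamard structure in the diagonal case emerges from $\partial G_{ii}/\partial(b_t)_j=-2\phi'((b_t)_j)(C^{-1})_{ij}G_{ij}$ with $G=C^{-1}B_tC^{-1}$, and the scalar formulas are just its one-dimensional shadow.

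The second step is to certify that discarding the $\log\det$ curvature can only enlarge the quadratic. I would show $\nabla^2 L(\hat b)\prec 0$, i.e. strict local concavity of $L$. In both settings its Hessian splits as a diagonal part with entries $\phi''((\hat b)_i)(C_t^{-1})_{ii}$ minus the Schur product $(\phi'(\hat b)\phi'(\hat b)^\top)\odot(C_t^{-1}\odot C_t^{-1})$. The hypothesis $f(\hat b)\succ 0$ forces every coordinate of $\hat b$ to be positive, where $\phi(b)=\log(1+b)$ is smooth with $\phi''<0$; hence the diagonal part is negative definite, while the Schur term is positive semidefinite by the Schur product theorem. Consequently $H_t-\nabla^2\psi_t(\hat b)=-\nabla^2 L(\hat b)\succ 0$.

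I would finish with a Taylor argument. Since $\hat b$ has positive coordinates and $C_t\succ 0$, the map $\psi_t$ is $C^3$ on a neighbourhood, so $\psi_t(b_t)$ equals its second-order Taylor polynomial at $\hat b$ plus a remainder that is $O(\|b_t-\hat b\|^3)$ with a locally bounded constant. The gap between $\psi_t(b_t)$ and the claimed bound is then exactly $\tfrac12 (b_t-\hat b)^\top\nabla^2 L(\hat b)(b_t-\hat b)$ plus this cubic remainder; the quadratic term is at most $-\tfrac{c}{2}\|b_t-\hat b\|^2$ for $c>0$ the least eigenvalue of $-\nabla^2 L(\hat b)$, which dominates the cubic term once $\|b_t-\hat b\|$ is small enough, giving the inequality on a neighbourhood and explaining the qualifier in the statement.

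The step I expect to be the real work is the Hessian bookkeeping in the diagonal setting: tracking the two ways each coordinate $(b_t)_j$ enters $G=C^{-1}B_tC^{-1}$ and reassembling the mixed partials into the compact Hadamard-product form of $H_t$, together with the Schur-product positivity needed to sign $\nabla^2 L$; everything in the scalar case then follows by specialising to $d=1$.
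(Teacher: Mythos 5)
Your proposal is correct and follows essentially the same route as the paper: Lemma \ref{lemma:derivatives} computes the same first and second derivatives of $\psi_t$ (via series expansions rather than your chain-rule matrix-calculus identities), and the paper's proof of the proposition establishes $\frac{\partial^2 \psi_t}{\partial b_t^2}\big|_{\hat b_{t-1\mid t-1}} \prec H_t$ by discarding exactly the terms you identify as the $\log\det$ curvature, using the same signs ($\phi''<0$, positive diagonal entries of $C_t^{-1}$, Schur product theorem) before concluding the local bound. Your only additions are presentational: naming the split $\psi_t = L + T$ so that $H_t = \nabla^2 T(\hat b_{t-1\mid t-1})$ exactly, and spelling out the Taylor-remainder argument that the paper leaves implicit in the phrase ``the local upper-bound property holds if $\frac{\partial^2 \psi_t}{\partial b_t^2}\big|_{\hat b_{t-1\mid t-1}} \prec H_t$''.
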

The upper-bound of the Kullback-Leibler divergence obtained thanks to the proposition above admits a closed-form minimum:
\begin{proposition}\label{prop:optimum_b}
In the scalar and diagonal settings, for any $t$ such that $f(\hat b_{t-1\mid t-1})\succ 0$ and any $\hat b_{t\mid t},\Sigma_{t\mid t}$,
\begin{align*}
	& KL\Big(\mathcal{N}(\hat\theta_{t\mid t},P_{t\mid t})
	\mathcal{N}(\hat a_{t\mid t},s_{t\mid t})
	\mathcal{N}(\hat b_{t\mid t},\Sigma_{t\mid t})\ ||\ 
	p(\cdot\mid \mathcal{F}_t)\Big) \\
	&\quad \le -\frac12 \log\det\Sigma_{t\mid t} + \frac12 \frac{\partial \psi_t}{\partial b_t}\Big|_{\substack{\hat b_{t-1\mid t-1}}}^\top (\hat b_{t\mid t}-\hat b_{t-1\mid t-1}) \\
	& \qquad + \frac12 \Tr\Big( (\Sigma_{t\mid t} + (\hat b_{t\mid t}-\hat b_{t-1\mid t-1})(\hat b_{t\mid t}-\hat b_{t-1\mid t-1})^\top ) \\
	& \qquad\qquad\qquad \Big((\Sigma_{t-1\mid t-1} + \rho_b I)^{-1} + \frac12 H_t \Big) \Big) + c_b \,,
\end{align*}
where $H_t$ is defined in Proposition \ref{prop:upper} and $c_b$ is a constant independent of $\hat b_{t\mid t},\Sigma_{t\mid t}$. The minimum of the upper-bound detailed above is obtained with:
\begin{align}
	\label{eq:updateSigma}
	& \Sigma_{t\mid t} = \Big((\Sigma_{t-1\mid t-1} + \rho_b I)^{-1} +\frac12 H_t\Big)^{-1} \,, \\
	\label{eq:updateb}
	& \hat b_{t\mid t} = \hat b_{t-1\mid t-1} - \frac12 \Sigma_{t\mid t} \frac{\partial \psi_t}{\partial b_t}\Big|_{\substack{\hat b_{t-1\mid t-1}}} \,.
\end{align}
\end{proposition}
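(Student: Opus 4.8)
The plan is to control the three terms of the KL expression in Lemma \ref{lemma:kl_expression} that depend on $(\hat b_{t\mid t},\Sigma_{t\mid t})$ — namely $-\frac12\log\det\Sigma_{t\mid t}$, the expectation $\frac12\mathbb{E}_{b_t\sim\mathcal{N}(\hat b_{t\mid t},\Sigma_{t\mid t})}[\psi_t(b_t)]$, and the trace against $(\Sigma_{t-1\mid t-1}+\rho_b I)^{-1}$ — and then to minimize the resulting bound in closed form. First I would plug the second-order bound of Proposition \ref{prop:upper} into the expectation and integrate it against the Gaussian density. Writing $u=\hat b_{t\mid t}-\hat b_{t-1\mid t-1}$ and $g=\frac{\partial\psi_t}{\partial b_t}\big|_{\hat b_{t-1\mid t-1}}$, the moment identities $\mathbb{E}[b_t]=\hat b_{t\mid t}$ and $\mathbb{E}[(b_t-\hat b_{t-1\mid t-1})^\top H_t(b_t-\hat b_{t-1\mid t-1})]=\Tr(H_t\Sigma_{t\mid t})+u^\top H_t u$ give $\frac12\mathbb{E}[\psi_t(b_t)]\le \frac12 g^\top u+\frac14\Tr(H_t\Sigma_{t\mid t})+\frac14 u^\top H_t u$, up to the constant $\frac12\psi_t(\hat b_{t-1\mid t-1})$ absorbed into $c_b$.

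Next I would merge the quadratic contributions. Using $u^\top H_t u=\Tr(H_t uu^\top)$ and setting $W=\Sigma_{t\mid t}+uu^\top$, the two quadratic-type pieces — the one just obtained and the $\frac12\Tr(W(\Sigma_{t-1\mid t-1}+\rho_b I)^{-1})$ coming from the Lemma — collapse into the single trace $\frac12\Tr\big(W\,((\Sigma_{t-1\mid t-1}+\rho_b I)^{-1}+\tfrac12 H_t)\big)$. Adding back $-\frac12\log\det\Sigma_{t\mid t}$ and the linear term $\frac12 g^\top u$ reproduces exactly the stated upper bound.

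For the minimization, write $G=(\Sigma_{t-1\mid t-1}+\rho_b I)^{-1}+\frac12 H_t$ and expand $W$, so the bound reads $-\frac12\log\det\Sigma_{t\mid t}+\frac12 g^\top u+\frac12\Tr(\Sigma_{t\mid t}G)+\frac12 u^\top G u+c_b$. This decouples in $u$ and $\Sigma_{t\mid t}$: the convex quadratic in $u$ is minimized at $u=-\frac12 G^{-1}g$, and the matrix part $-\frac12\log\det\Sigma_{t\mid t}+\frac12\Tr(\Sigma_{t\mid t}G)$ has gradient $-\frac12\Sigma_{t\mid t}^{-1}+\frac12 G$, vanishing at $\Sigma_{t\mid t}=G^{-1}$, which is \eqref{eq:updateSigma}; substituting $\Sigma_{t\mid t}=G^{-1}$ into the expression for $u$ gives \eqref{eq:updateb}.

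The main obstacle is establishing $G\succ 0$, which is what makes both stationary points genuine minimizers — the map $\Sigma\mapsto-\log\det\Sigma+\Tr(\Sigma G)$ is strictly convex with unique minimizer $G^{-1}$, and the quadratic in $u$ is convex, precisely when $G\succ 0$. I would verify this from the explicit $H_t$ of Proposition \ref{prop:upper}: since $C_t\succ 0$ and $B_t\succcurlyeq 0$, the Hessian-type term is positive semidefinite (in the scalar case because $\Tr(C_t^{-2}B_tC_t^{-1})\ge 0$, in the diagonal case by the Schur product theorem applied to the Hadamard products), while concavity of $\phi$ ($\phi''\le 0$) makes the $\phi''$-term positive semidefinite as well; added to the positive definite $(\Sigma_{t-1\mid t-1}+\rho_b I)^{-1}$ this yields $G\succ 0$. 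A secondary caveat is that Proposition \ref{prop:upper} only guarantees the quadratic bound on a neighbourhood of $\hat b_{t-1\mid t-1}$ whereas the expectation integrates over all of $\mathbb{R}^d$; I would address this by noting that \eqref{eq:updateSigma} keeps $\Sigma_{t\mid t}\preccurlyeq\Sigma_{t-1\mid t-1}+\rho_b I$, so the bound is effectively exercised only near $\hat b_{t-1\mid t-1}$ where it holds.
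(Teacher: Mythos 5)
Your proposal is correct and follows essentially the same route as the paper's proof: decompose the KL divergence via Lemma \ref{lemma:kl_expression}, integrate the quadratic bound of Proposition \ref{prop:upper} using the first two Gaussian moments, and solve the first-order conditions in $\hat b_{t\mid t}$ and $\Sigma_{t\mid t}$. Your two additions actually go beyond the paper: verifying $G=(\Sigma_{t-1\mid t-1}+\rho_b I)^{-1}+\tfrac12 H_t\succ 0$ (so the stationary points are genuine minimizers, which the paper never checks), and flagging that Proposition \ref{prop:upper} is only a local bound while the expectation runs over all of $\mathbb{R}^d$ --- a gap the paper's proof silently shares, though your patch via $\Sigma_{t\mid t}\preccurlyeq\Sigma_{t-1\mid t-1}+\rho_b I$ is itself heuristic, since the Gaussian integral always puts positive mass outside any neighbourhood of $\hat b_{t-1\mid t-1}$.
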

Similarly as \eqref{eq:updatea} we can interpret \eqref{eq:updateb} as a gradient step on $\psi_t$ and we can remark that $\psi_t(\hat b)$ is the following expected log-likelihood:
\begin{align*}
    \mathbb{E}_{\theta_t\sim\mathcal{N}(\hat\theta_{t\mid t},P_{t\mid t})}[\log\mathcal{N}(\theta_t\mid K \hat\theta_{t-1\mid t-1}, KP_{t-1\mid t-1}K^\top +f(\hat b))] \,.
\end{align*}
Thus, except the exact recursive steps on $\hat\theta_{t\mid t},P_{t\mid t}$ which are extensions of the Kalman filter steps, our recursive steps resemble Stochastic Gradient Variational Bayes (SGVB) algorithm steps as described in \cite{knowles2015stochastic}. This novel class of algorithms is very popular for tuning complex deep learning networks, see for instance \cite{kingma2014stochastic,tjandra2015stochastic}. There, the expectation of the log-likelihood is approximated by using Monte-Carlo simulation and only the first order of the gradient is used.

\section{Viking}\label{sec:algorithm}
We now introduce the algorithm following from the recursive updates described in the previous sections.

\subsection{Definition of the Algorithm}
Theorem \ref{th:optimum_theta} yields exact recursive updates for $\hat\theta_{t\mid t},P_{t\mid t}$ but $A_t^{-1}$ does not admit an explicit form. We propose to run Monte-Carlo estimation of $A_t$ with very small samples ($n_{\rm mc}=10$ draws by default). As the KL optimization is a coupled problem we solve it in a classical iterative fashion, that is, we repeat $N$ times the updates alternately ($N=2$ is a good default value).
We summarize the procedure in Algorithm \ref{alg:viking}. We name it Viking (\textbf{V}ariational Bayes\textbf{i}an Variance Trac\textbf{king}).

\begin{algorithm}[!t]
{\caption{Viking at time step $t$}
\label{alg:viking}}
{
{\bf Time-invariant parameters:} $\rho_a,\rho_b,n_{\rm mc},f$.\\
{\bf Default:} $\rho_a=e^{-9},\rho_b=e^{-6},n_{\rm mc}=10,f(\cdot)=D_{\phi(\cdot)}$.\\
{\bf Inputs}: $\hat\theta_{t-1\mid t-1}$, $P_{t-1\mid t-1}$, $\hat a_{t-1\mid t-1}$, $s_{t-1\mid t-1}$, $\hat b_{t-1\mid t-1}$, $\Sigma_{t-1\mid t-1}$, $x_t$, $y_t$. \\
{\bf Initialize}: \\
Set $\hat a_{t\mid t}^{(0)}=\hat a_{t-1\mid t-1}$, $s_{t\mid t}^{(0)}=s_{t-1\mid t-1}+\rho_a$. \\
Set $\hat b_{t\mid t}^{(0)}=\hat b_{t-1\mid t-1}$, $\Sigma_{t\mid t}^{(0)}=\Sigma_{t-1\mid t-1}+\rho_b$.\\
{\bf Iterate: for} $i=1,\hdots,N$:
\begin{itemize}
\item
1. Set $A_t$ then compute $A_t^{-1}$ using \eqref{eq:defA} with \\ \hspace{0.4cm}Monte-Carlo from $n_{\rm mc}$ samples of $\mathcal{N}(\hat b_{t\mid t}^{(i-1)},\Sigma_{t\mid t}^{(i-1)})$.\\
2. Set $P_{t\mid t}^{(i)},\hat\theta_{t\mid t}^{(i)}$ using \eqref{eq:updateP} and \eqref{eq:updatetheta}, with $A_t^{-1}$ from step 1 \\
\hspace{0.4cm}and $\hat a_{t\mid t}^{(i-1)},s_{t\mid t}^{(i-1)}$.
\item
{\bf If we learn $\sigma_t^2$:}\\
3. Set $s_{t\mid t}^{(i)}$ using \eqref{eq:updates} with $\hat\theta_t^{(i)},P_t^{(i)},\hat a_{t\mid t}^{(i-1)}$.\\
4. Set $\hat a_{t\mid t}^{(i)}$ using \eqref{eq:updatea} with $\hat\theta_t^{(i)},P_t^{(i)},s_{t\mid t}^{(i)}$.
\item
{\bf If we learn $Q_t$:}\\
5. Set $\Sigma_{t\mid t}^{(i)},\hat b_{t\mid t}^{(i)}$ using \eqref{eq:updateSigma} and \eqref{eq:updateb}.\\
\hspace{0.4cm}Apply threshold $\hat b_{t\mid t}^{(i)}=\max(\hat b_{t\mid t}^{(i)},0)$.
\end{itemize}
{\bf Outputs:} $\hat\theta_{t\mid t}=\hat\theta_{t\mid t}^{(N)},P_{t\mid t}=P_{t\mid t}^{(N)},\hat a_{t\mid t}=\hat a_{t\mid t}^{(N)},s_{t\mid t}=s_{t\mid t}^{(N)},\hat b_{t\mid t}=\hat b_{t\mid t}^{(N)},\Sigma_{t\mid t}=\Sigma_{t\mid t}^{(N)}$.
}
\end{algorithm}

\subsection{Complexity}\label{sec:complex}
We decompose the number of operations of Viking in Table \ref{tab:complexity}. Although matrix multiplication and inversion have the same asymptotic complexity, in practice inversion is more costly.
\begin{table}[t!]
    \begin{center}
    \caption{Complexity of Algorithm \ref{alg:viking}.}
    \label{tab:complexity}
    \begin{tabular}{c c}
        \hline
        Steps & Operations \\
        \hline
        1 & $n_{\rm mc} S + (n_{\rm mc}+1)I(d) + \mathcal{O}(M(d))$ \\
        2 & $\mathcal{O}(d^2)$ \\
        3 and 4 & $\mathcal{O}(d^2)$ \\
        5 & $3I(d) + \mathcal{O}(M(d))$ \\
        \hline
        Whole & $N\big(n_{\rm mc} S+(n_{\rm mc}+4)I(d)+\mathcal{O}(M(d))\big)$ \\
        \hline
    \end{tabular}
    \end{center}
    $S$ denotes the complexity of gaussian draw, $M(d)$ and $I(d)$ denote the complexity of matrix multiplication and inversion.
\end{table}

We suggest the default $N=2$ and $n_{\rm mc}=10$, therefore the complexity of Viking is essentially driven by the complexity of matrix inversion. Consequently it is proportional to the one of methods relying on Kalman smoothing as in \cite{huang2020slide}.

\section{Experiments}\label{sec:experiments}
We run several experiments, and we argue that our method behaves well for misspecified data. We begin with well-specified data generated under a state-space model with smoothly varying variances. Then we focus on misspecified data.

\subsection{Well-Specified Data with Unknown $\sigma_t^2$ and Known $Q_t$}\label{sec:exp_sarkka}
We reproduce the experiment presented in \cite{sarkka2009recursive} on the stochastic resonator model:
\begin{align*}
	& \theta_{t+1} - \begin{pmatrix}
1 & 0 & 0 \\
0 & \cos(\omega\Delta t) & \frac{\sin(\omega\Delta t)}{\omega} \\
0 & - \omega \sin(\omega\Delta t) & \cos(\omega\Delta t)
\end{pmatrix}
	\theta_t \sim\mathcal{N}(0,Q) \,, \\
	& y_t - (\theta_{t,1} + \theta_{t,2}) \sim\mathcal{N}(0,\sigma_t^2) \,,
\end{align*}
where we set $\omega=0.05$ and $\Delta t=0.1$ and the known covariance of the process noise is $Q=D_{(0.01,0,0.0001)}$. We display the variance trajectory for one simulation in Figure \ref{fig:sarkkatraj}.
\begin{figure}
	\centering
	\includegraphics[width=7cm]{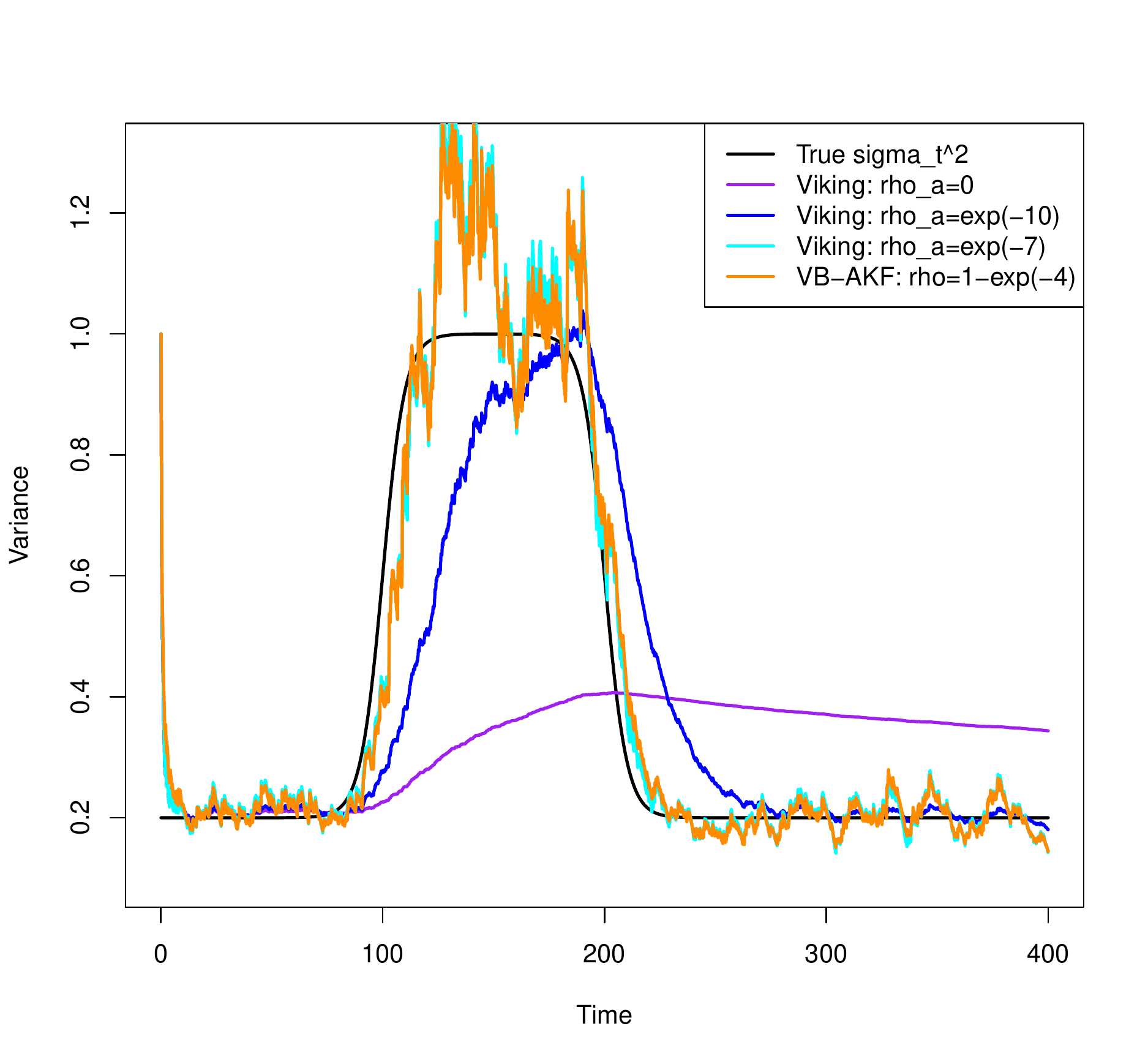}
	\caption{Trajectory of the observation variance $\sigma_t^2$ estimated by our algorithm and compared to the estimate provided in \cite{sarkka2009recursive}.}
	\label{fig:sarkkatraj}
\end{figure}
Running the experiment $100$ times we observe that both methods almost coincide in terms of root-mean-square error: 0.6859 for Viking and 0.6858 for VB-AKF \cite{sarkka2009recursive}. In this comparison we take the best value of $\rho_a$ for Viking as well as the best $\rho$ for the VB-AKF in the list $e^{-i},1\le i\le 10$.

\subsection{Well-Specified Data with Unknown $\sigma_t^2$ and $Q_t$}\label{sec:exp_synws}
We run a second simulation inspired by \cite{huang2020slide} in a well-specified setting. We generate $x_t\in[0,1]^5$ using two possible alternatives:
\begin{enumerate}
\item
{\bf Uniform i.i.d. design:} $(x_t)$ is independent identically distributed. For each $t$, $x_t$ is composed of $4$ independent coefficients generated with uniform distributions on $[0,1]$ and one deterministic $1$ coefficient.
\item
{\bf Uniform non-i.i.d. design:} $(x_t)$ has the same distribution but is not i.i.d., a sample is displayed in Figure \ref{fig:x_uninoniid}. Precisely $x_1$ is generated as before. Then for $j\in\{1,2,3,4\}$ and $t\ge 2$, we consider $z_{t,j}=x_{t-1,j}+\varepsilon_{t,j}$ where $\varepsilon_{t,j}\sim\mathcal{N}(0,10^{-3})$ and we generate
\begin{align*}
	x_{t,j} = \begin{cases}
	z_{t,j} \text{ if } 0\le z_{t,j} \le 1\,, \\
	\lceil z_{t,j} \rceil - z_{t,j} \text{ otherwise}.
	\end{cases}
\end{align*}
\begin{figure}
	\centering
	\includegraphics[width=7cm]{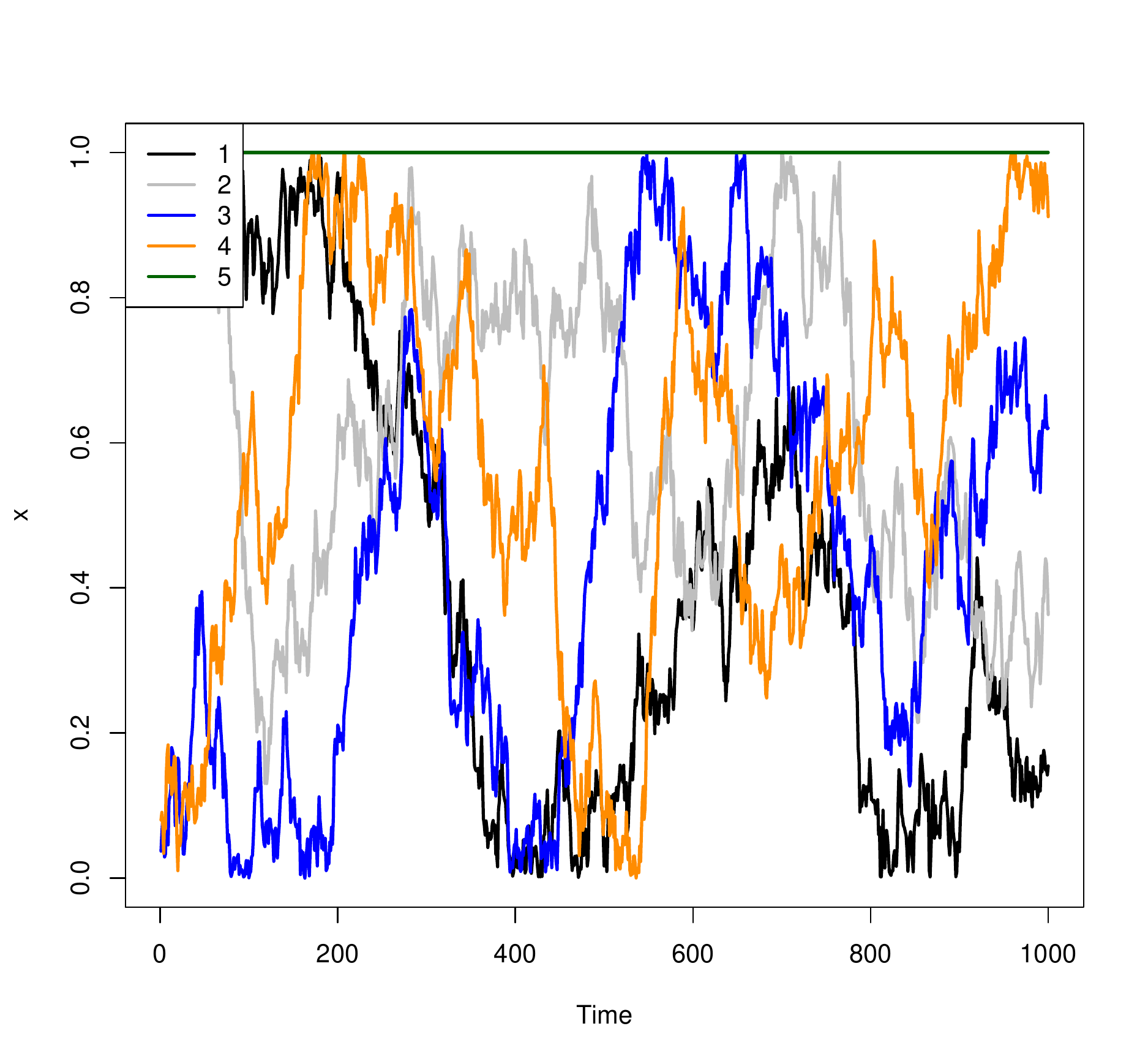}
	\caption{Example of trajectory of the 5 components of the vector $x_t$ considered in the setting {\it uniform non-iid}.}
	\label{fig:x_uninoniid}
\end{figure}
\end{enumerate}
Then we generate $y_t$ by the following state-space model:
\begin{align*}
	& \theta_0\sim\mathcal{N}(0,I) \,, \\
    & \theta_t - \theta_{t-1} \sim \mathcal{N}(0,Q_t) \,, \\
    & y_t - \theta_t^\top x_t \sim \mathcal{N}(0,\sigma_t^2) \,,
\end{align*}
where
\begin{align*}
	& \sigma_t^2=1+0.1\cos{\frac{4\pi t}{n}} \,, \\
	& Q_t=\Big(0.25+0.2\cos{\frac{4\pi t}{n}}\Big) D_{(0,0,1,1,1)} \,.
\end{align*}
The simulation time is $n=10^3$. In Figure \ref{fig:mse_both} we compare Viking to the slide window variational adaptive Kalman filter (SWVAKF) introduced in \cite{huang2020slide}, which we tune in several ways. First we increase the window length from 5 to 20, resulting in a significant improvement at the cost of more computations. Second we tune the forgetting factor, and to play fair with Viking we define different forgetting factors for the estimation of $\sigma_t^2$ and the estimation of $Q_t$. We select the best {\it a posteriori}, and we do the same for Viking. Third, we enforce diagonal and scalar variants of the SWVAKF: the diagonal variant is defined by replacing by 0 each non-diagonal coefficient after each update, and on top of that in the scalar variant we replace each diagonal coefficient by the averaged diagonal.

\subsection{Misspecified Data with Unknown $\sigma_t^2$ and $Q_t$}\label{sec:exp_synms}
To experiment misspecification we consider a state-space model with two states evolving independently with identical processes, and the observation is generated using one of them uniformly at random. That is summarized by the following set of equations:
\begin{align*}
	& \theta_0^{(i)}\sim\mathcal{N}(0,I) \,, & i\in\{0,1\}\,, \\
	& \theta_t^{(i)}-0.9\,\theta_t^{(i)} \sim \mathcal{N}(0,Q_t)\,, & i\in\{0,1\}\,, \\
	& i_t\sim\mathcal{B}(1/2) \,, & \\
	& y_t - \theta_t^{(i_t)\top} x_t \sim\mathcal{N}(0,\sigma_t^2) \,, &
\end{align*}
where we assume all gaussian noises to be independent of each other and of $(i_t)$. We consider the same settings for $x_t$ as well as the same variances $\sigma_t^2,Q_t$ defined in Section \ref{sec:exp_synws}.

The contraction (here by a coefficient 0.9) is necessary to have the convergence of the distribution of $y_t$ as well as of the conditional distribution of $y_t$ given the filtration $\mathcal{F}_{t-1}$. In the tracking mode (no contraction) the variance of the conditional distribution would diverge to $\infty$, and therefore the error of any forecasting strategy would also diverge to $\infty$.

We refer to Figure \ref{fig:mse_both} for the evaluation in mean squared error. We observe that Viking in the diagonal setting behaves poorly compared to the SWVAKF for well-specified data with i.i.d. design but better in the other 3 experiments. As mentioned in Section \ref{sec:kl} we believe it is natural that a consistent adaptive Kalman filter should be closer to the true Kalman filter than our algorithm which cannot be written using Kalman recursion. However the careful property (see the design of $f$ in Section \ref{sec:choice_f}) allows us to outperform existing methods for misspecified data. This interpretation of the observation generation may to a minor extent be transposed to the design generation. Indeed, in our non-i.i.d. design a shift in the data should be harder to attribute to one coefficient of the state, and therefore it should be harder to learn the variances, that is why the difference between the two Kalman filters with constant variances is smaller. Thus the model should not be trusted too much.

\begin{figure*}
	\centering
	\includegraphics[width=7cm]{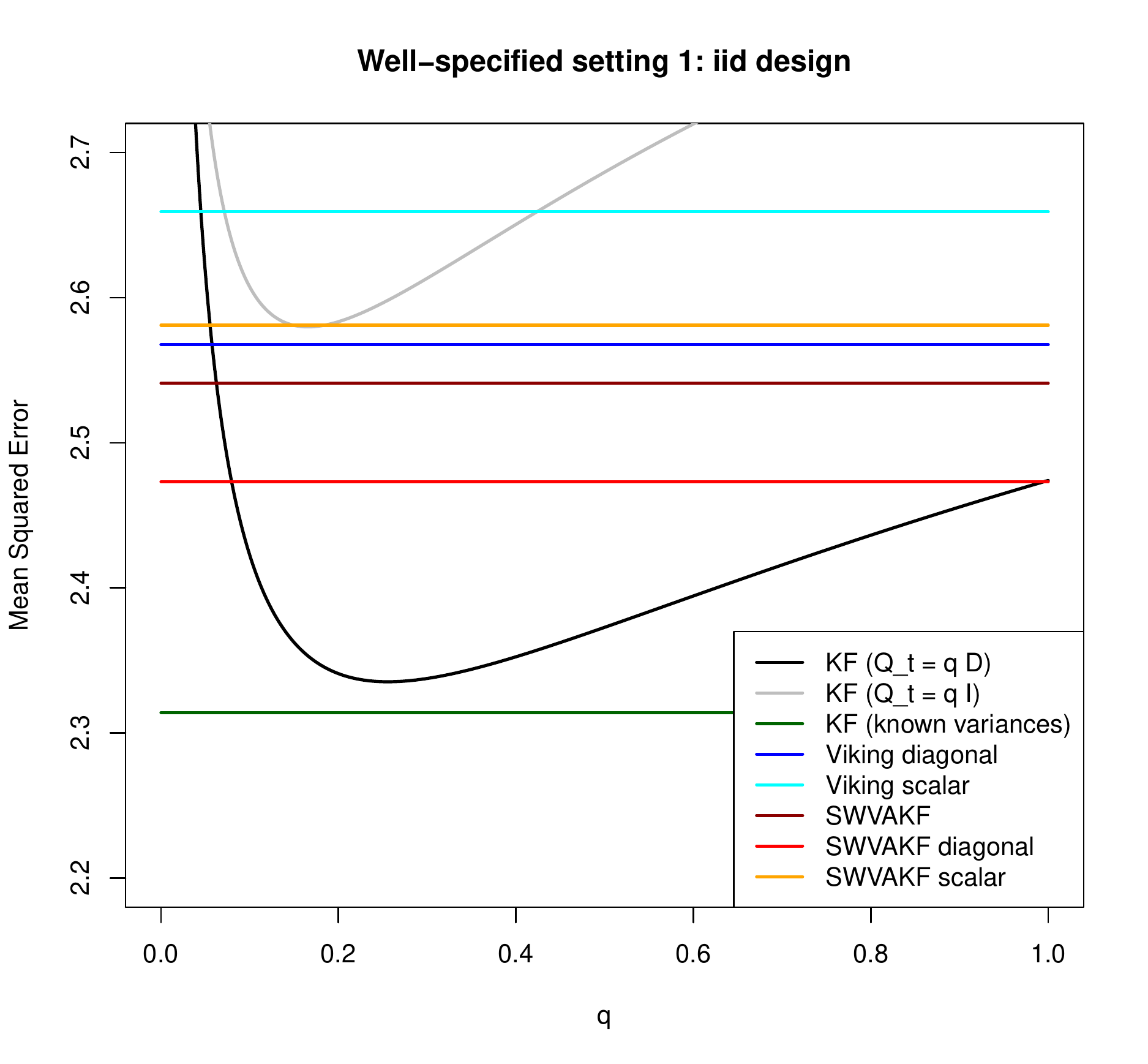}
	\includegraphics[width=7cm]{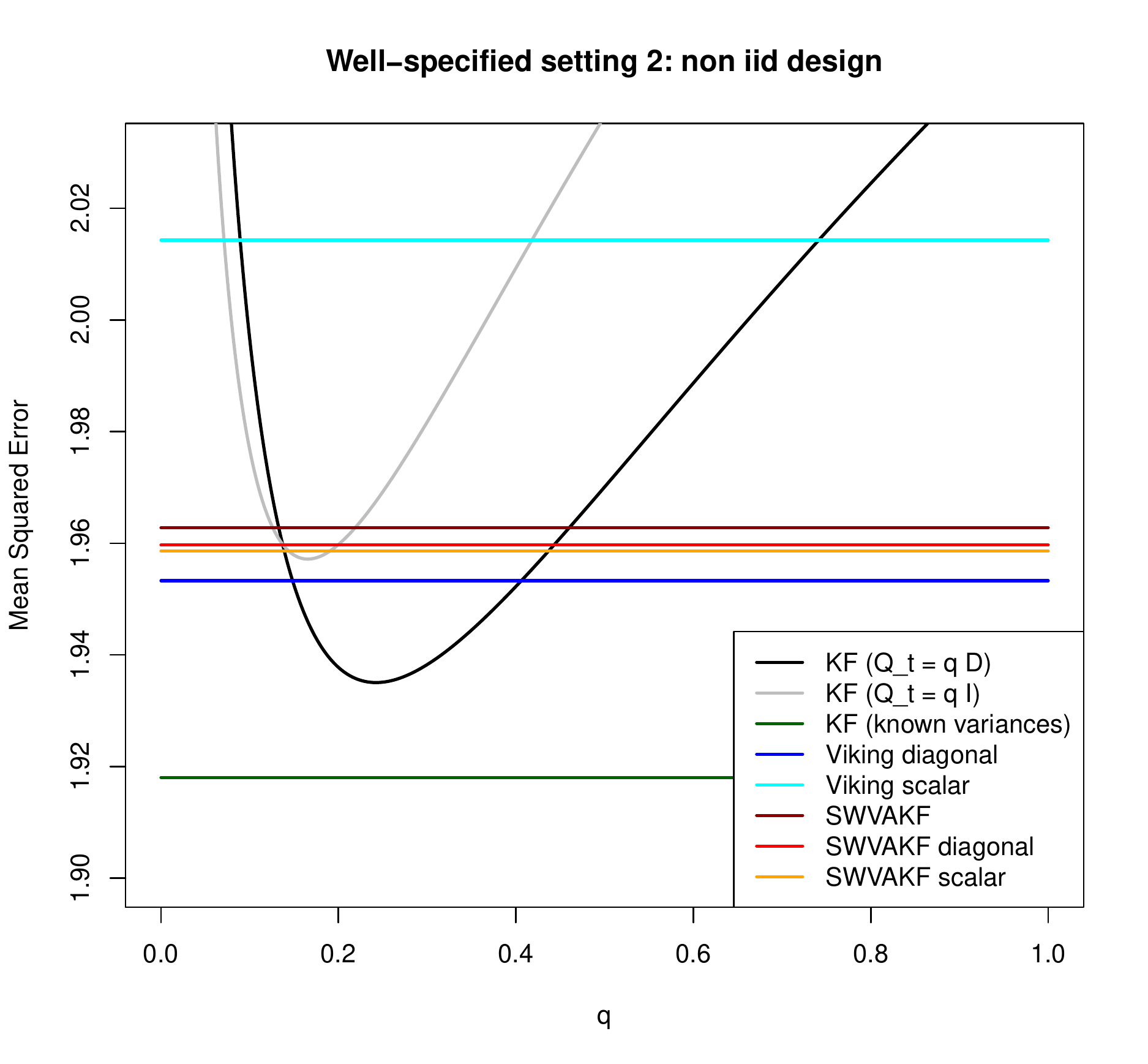}
	\includegraphics[width=7cm]{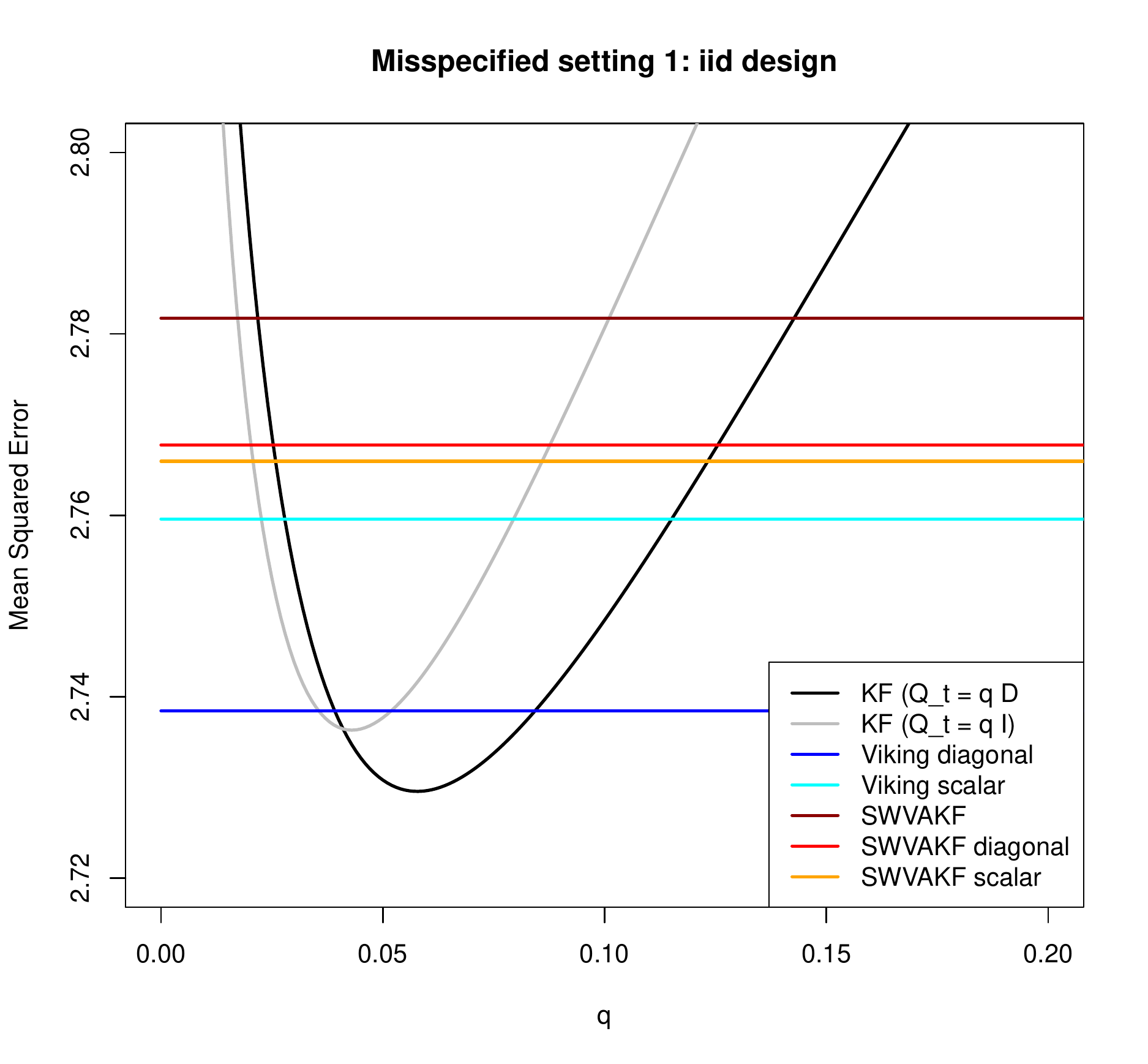}
	\includegraphics[width=7cm]{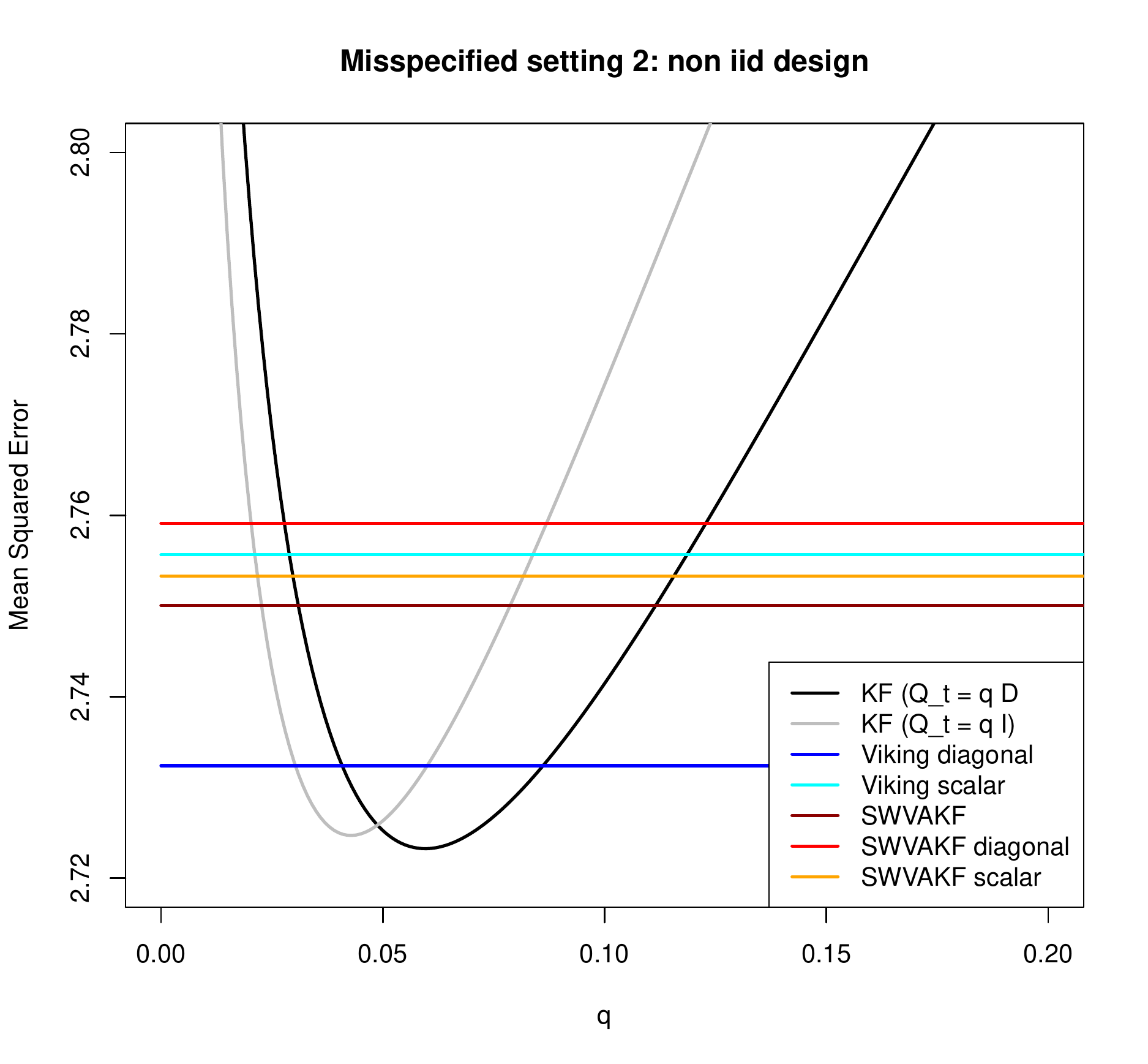}
	\caption{Mean Squared Errors in the four settings introduced in Sections \ref{sec:exp_synws} and \ref{sec:exp_synms}: i.i.d. (left) or non-i.i.d. (right) design, well-specified (top) or misspecified (bottom). We compare Viking to the SWVAKF of \cite{huang2020slide} in the scalar and diagonal settings. For Viking we set $n_{\rm mc}=10$. The oracles to which we compare are the Kalman filter with known variances when they exist (well-specified settings) and two Kalman filters with constant variances: the state noise covariance is either $Q=q\cdot D_{(0,0,1,1,1)}$ or $Q=q\cdot I$ and in both we set the space noise variance to $\sigma^2=1$. We evaluate through the mean squared error on the second half of the experiment in order to not depend too much on the initialization (even if we have same initial expected variances for Viking and SWVAKF).}
	\label{fig:mse_both}
\end{figure*}

\subsection{Impact of $n_{\rm mc}$}
The number of Monte-Carlo samples used at each step to compute $A_t^{-1}$ is a crucial factor of the complexity of Viking. It is therefore necessary to evaluate its impact on the performance in order to reach the best compromise between forecasting and computational efficiencies.
We refer to Figure \ref{fig:impact_mc} for an evaluation of the error with different values of $n_{\rm mc}$. The default value $n_{\rm mc}=10$ seems reasonable.
\begin{figure}
	\centering
	\includegraphics[width=7cm]{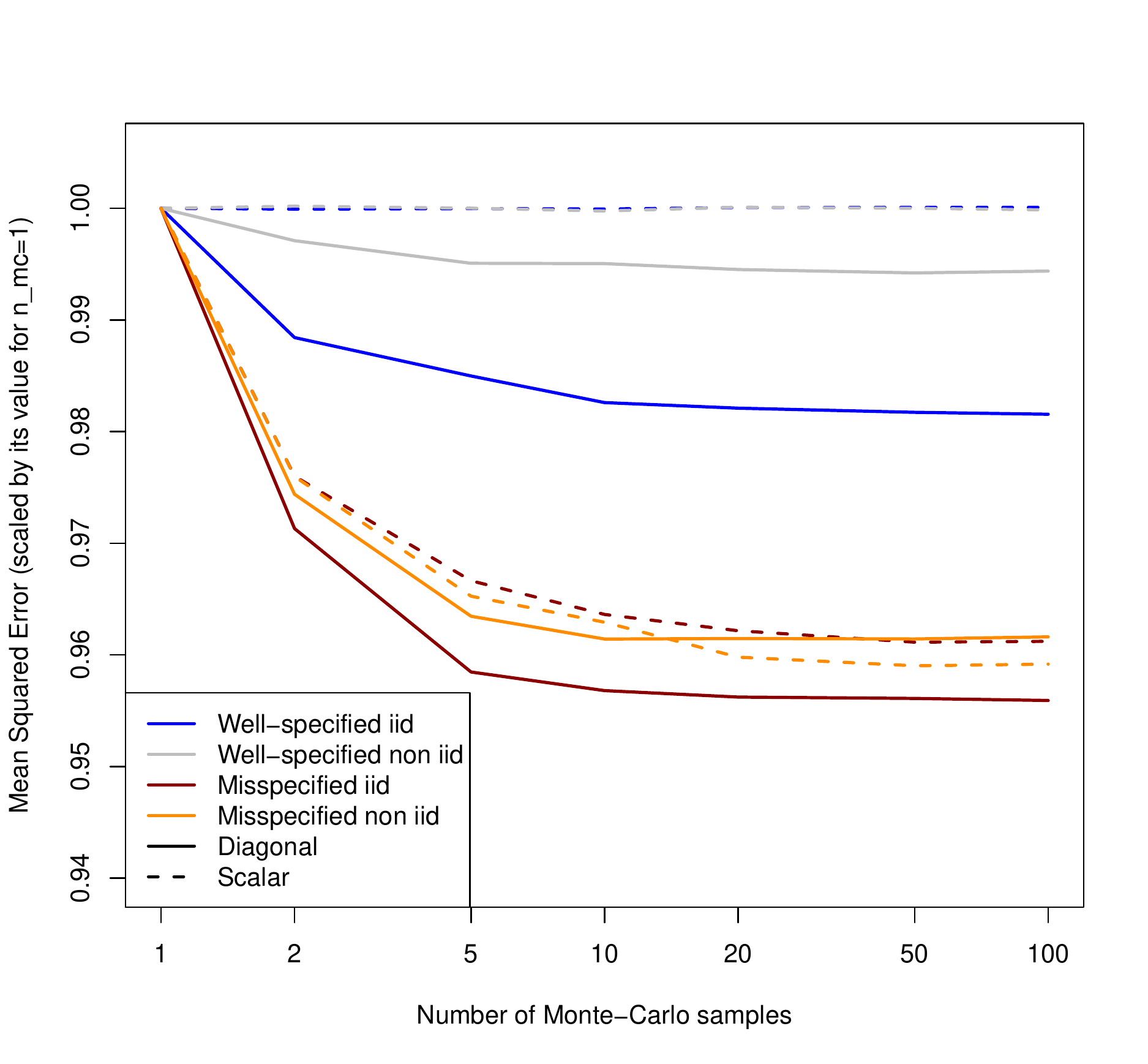}
	\caption{Mean Squared Error of Viking as a function of $n_{\rm mc}$. We scale by the mean squared error of the algorithm with $n_{\rm mc}=1$ in order to fit the different algorithms (diagonal and scalar settings) as well as the different experiments (i.i.d. or non-i.i.d. design, well-specified or misspecified) in the same graph.}
	\label{fig:impact_mc}
\end{figure}

\section{Conclusion}
We have introduced Viking, an algorithm for adaptive time series forecasting relying on state-space models with unknown state and space variances. We derived an augmented latent model, and we apply variational Bayes for the inference. We extend the Kalman filter to uncertain environment. For the additional latent variables, we use approximative steps close to SGVB recursive ones. The prediction performances are better than the state of the art in misspecified settings at the same computational cost. 

The choice of the function applied to the latent variable to obtain the state noise covariance matrix is a perspective of future research. We provide a specific choice leading to promising experimental results on simulations in both well-specified and misspecified settings. However we wrote most of the article considering this function is a parameter of Viking, because we believe other functions may be of interest.

\appendix  

We provide the proofs for all the claims of the article.

\begin{proof}[Proof of Lemma \ref{lemma:kl_expression}]
We start from the expression of \eqref{eq:kl} that we can decompose as follows:
\begin{align*}
	& KL\Big(\mathcal{N}(\hat\theta_{t\mid t},P_{t\mid t})
	\mathcal{N}(\hat a_{t\mid t},s_{t\mid t})
	\mathcal{N}(\hat b_{t\mid t},\Sigma_{t\mid t})\ ||\ 
	p(\cdot\mid \mathcal{F}_t)\Big) \\
	& \quad = \mathbb{E}_{\theta_t\sim\mathcal{N}(\hat\theta_{t\mid t},P_{t\mid t})} [\log \mathcal{N}(\theta_t\mid \hat\theta_{t\mid t},P_{t\mid t})] \\
	& \qquad + 
	\mathbb{E}_{a_t\sim \mathcal{N}(\hat a_{t\mid t},s_{t\mid t})}[\log \mathcal{N}(a_t\mid \hat a_{t\mid t},s_{t\mid t})] \\
	& \qquad + \mathbb{E}_{b_t\sim\mathcal{N}(\hat b_{t\mid t},\Sigma_{t\mid t})}[\log \mathcal{N}(b_t\mid \hat b_{t\mid t},\Sigma_{t\mid t})] \\
	& \qquad - \mathbb{E}_{(\theta_t,a_t,b_t)\sim \mathcal{N}(\hat\theta_{t\mid t},P_{t\mid t})\mathcal{N}(\hat a_{t\mid t},s_{t\mid t})\mathcal{N}(\hat b_{t\mid t},\Sigma_{t\mid t})} \\
	& \qquad\qquad [\log p(\theta_t,a_t,b_t\mid \mathcal{F}_t)]  \,.
\end{align*}
The last term can be split using the factorized form of \eqref{eq:posterior}. We observe that on the one hand,
\begin{align*}
	& \mathbb{E}_{(\theta_t,a_t,b_t)\sim \mathcal{N}(\hat\theta_{t\mid t},P_{t\mid t})\mathcal{N}(\hat a_{t\mid t},s_{t\mid t})\mathcal{N}(\hat b_{t\mid t},\Sigma_{t\mid t})} \\
	& \qquad [\log \mathcal{N}(y_t\mid\theta_t^\top x_t, \exp(a_t))] \\
	& \quad = -\frac12 \log(2\pi)- \frac12 \hat a_{t\mid t} \\
	& \qquad - \frac12 ((y_t-\hat\theta_{t\mid t}^\top x_t)^2 + x_t^\top P_{t\mid t} x_t) \exp(-\hat a_{t\mid t}+\frac12s_{t\mid t}) \,,
\end{align*}
and on the other hand,
\begin{align*}
	& \mathbb{E}_{(\theta_t,a_t,b_t)\sim \mathcal{N}(\hat\theta_{t\mid t},P_{t\mid t})\mathcal{N}(\hat a_{t\mid t},s_{t\mid t})\mathcal{N}(\hat b_{t\mid t},\Sigma_{t\mid t})}  \\
	& \qquad [\log \mathcal{N}(\theta_t\mid K\hat\theta_{t-1\mid t-1}, KP_{t-1\mid t-1}K^\top + f(b_t))] \\
	& \quad = - \frac{d\log(2\pi)}{2} - \frac12 \mathbb{E}_{b_t\sim\mathcal{N}(\hat b_{t\mid t},\Sigma_{t\mid t})}[ \psi_t(b_t) ] \,,
\end{align*}
where $\psi_t$ is defined in the lemma. Combining the last equations with the value of the entropy of gaussian random variables yields the result.
\end{proof}

\begin{proof}[Proof of Theorem \ref{th:optimum_theta}]
Thanks to Lemma \ref{lemma:kl_expression} we have
\begin{align*}
	& KL\Big(\mathcal{N}(\hat\theta_{t\mid t},P_{t\mid t})
	\mathcal{N}(\hat a_{t\mid t},s_{t\mid t})
	\mathcal{N}(\hat b_{t\mid t},\Sigma_{t\mid t})\ ||\ 
	p(\cdot\mid \mathcal{F}_t)\Big) \\
	&\quad = \frac12 \Tr\Big((P_{t\mid t} + (\hat\theta_{t\mid t}-K\hat\theta_{t-1\mid t-1})(\hat\theta_{t\mid t}-K\hat\theta_{t-1\mid t-1})^\top)A_t \Big) \\
	& \qquad + \frac12 ((y_t-\hat\theta_{t\mid t}^\top x_t)^2 + x_t^\top P_{t\mid t} x_t) \exp(-\hat a_{t\mid t} + \frac12 s_{t\mid t}) \\
	& \qquad -\frac12 \log\det P_{t\mid t} + c_{\theta} \,,
\end{align*}
where $c_{\theta}$ is a constant independent of $\hat\theta_{t\mid t},P_{t\mid t}$, and $A_t$ is defined in the theorem. To conclude we write the first order conditions:
\begin{align*}
	& -\frac12 P_{t\mid t}^{-1} + \frac12 \Big(A_t + \frac{x_tx_t^\top}{\exp(\hat a_{t\mid t} - \frac12 s_{t\mid t})} \Big) = 0 \,, \\
	& - \frac{(y_t-\hat\theta_{t\mid t}^{\top} x_t)x_t}{\exp(\hat a_{t\mid t} - \frac12 s_{t\mid t})} + A_t(\hat\theta_{t\mid t}-K\hat\theta_{t-1\mid t-1}) = 0 \,.
\end{align*}
\end{proof}

\begin{proof}[Proof of Proposition \ref{prop:optimum_s}]
Thanks to Lemma \ref{lemma:kl_expression}, we have
\begin{align*}
    & KL\Big(\mathcal{N}(\hat\theta_{t\mid t},P_{t\mid t})
    \mathcal{N}(\hat a_{t\mid t},s_{t\mid t})
    \mathcal{N}(\hat b_{t\mid t},\Sigma_{t\mid t})\ ||\ p(\cdot\mid \mathcal{F}_t)\Big) \\
    & \quad = \frac12 ((y_t-\hat\theta_{t\mid t}^\top x_t)^2 + x_t^\top P_{t\mid t} x_t) e^{-\hat a_{t\mid t} + s_{t\mid t}/2}  \\
    & \qquad + \frac12(s_{t-1\mid t-1}+\rho_a)^{-1} s_{t\mid t} -\frac12 \log(s_{t\mid t}) + c_s \,,
\end{align*}
where $c_s$ is a constant independent of $s_{t\mid t}$. Moreover, if $0\le s_{t\mid t}\le s_{t-1\mid t-1}+\rho_a$ then
\begin{align*}
	e^{s_{t\mid t}/2} \le e^{(s_{t-1\mid t-1}+\rho_a)/2} + \frac12(s_{t\mid t}-(s_{t-1\mid t-1}+\rho_a)) \,.
\end{align*}
The last two equations yield the upper-bound of the proposition. To obtain \eqref{eq:updates} we write the first order condition of optimality:
\begin{align*}
	& \frac14 ((y_t-\hat\theta_{t\mid t}^\top x_t)^2 + x_t^\top P_{t\mid t} x_t) e^{-\hat a_{t\mid t}} - \frac12 s_{t\mid t}^{-1} \\
	& \qquad + \frac12 (s_{t-1\mid t-1}+\rho_a)^{-1} = 0 \,.
\end{align*}
\end{proof}

\begin{proof}[Proof of Proposition \ref{prop:optimum_a}]
Thanks to Lemma \ref{lemma:kl_expression} we have
\begin{align*}
    & KL\Big(\mathcal{N}(\hat\theta_{t\mid t},P_{t\mid t})
    \mathcal{N}(\hat a_{t\mid t},s_{t\mid t})
    \mathcal{N}(\hat b_{t\mid t},\Sigma_{t\mid t})\ ||\ p(\cdot\mid \mathcal{F}_t)\Big) \\
    & \quad \le \frac12 ((y_t-\hat\theta_{t\mid t}^\top x_t)^2 + x_t^\top P_{t\mid t} x_t) e^{-\hat a_{t\mid t} + s_{t \mid t}/2} \\
    & \qquad +  \frac12 (s_{t-1\mid t-1}+\rho_a)^{-1} (\hat a_{t\mid t}-\hat a_{t-1\mid t-1})^2 + \frac12 \hat a_{t\mid t} + c_a \,,
\end{align*}
with $c_a$ a constant independent of $\hat a_{t\mid t}$. Moreover, if $\hat a_{t\mid t}\in[\hat a_{t-1\mid t-1}- M_a,\hat a_{t-1\mid t-1}+ M_a]$ we have the following upper-bound:
\begin{align*}
	& e^{-\hat a_{t\mid t}} \le e^{-\hat a_{t-1\mid t-1}}\Big(1 - (\hat a_{t\mid t} - \hat a_{t-1\mid t-1}) \\
	& \qquad\qquad\qquad\qquad\qquad + \frac{e^{M_a}}{2}(\hat a_{t\mid t} - \hat a_{t-1\mid t-1})^2\Big) \,.
\end{align*}
The last two equations yield the upper-bound of the proposition.
To obtain \eqref{eq:updatea} we write the first-order condition:
\begin{align*}
	& \frac{1}{s_{t-1\mid t-1}+\rho_a}(\hat a_{t\mid t}-\hat a_{t-1\mid t-1}) + \frac12 \\
	& \qquad + \frac12((y_t-\hat\theta_{t\mid t}^\top x_t)^2 + x_t^\top P_{t\mid t} x_t) e^{-\hat a_{t-1\mid t-1} + s_{t\mid t}/2} \\
	& \qquad\qquad \Big(-1+e^{M_a}(\hat a_{t\mid t}-\hat a_{t-1\mid t-1})\Big) = 0 \,,
\end{align*}
\end{proof}

To prove Propositions \ref{prop:upper} and \ref{prop:optimum_b} we first compute the first and second derivatives of $\psi_t$ for the scalar and diagonal settings: 
\begin{lemma}\label{lemma:derivatives}
Let $C_t=KP_{t-1\mid t-1}K^\top + f(b_t)$ and $B_t = P_{t\mid t} + (\hat\theta_{t\mid t}-K\hat\theta_{t-1\mid t-1})(\hat\theta_{t\mid t}-K\hat\theta_{t-1\mid t-1})^\top$.
\begin{itemize}
\item
If $f(\cdot)=\phi(\cdot)I$ then for any $b_t$, we have
\begin{align*}
	& \psi_t'(b_t) = \Tr(C_t^{-1}(I - B_tC_t^{-1}))\phi'(b_t) \,, \\
	& \psi_t''(b_t) = \Tr(C_t^{-1}(I - B_tC_t^{-1}))\phi''(b_t) \\
	& \qquad\qquad + 2 \Tr(C_t^{-2}(B_tC_t^{-1} - I/2)) \phi'(b_t)^2 \,.
\end{align*}
\item
If $f(\cdot)=D_{\phi(\cdot)}$ then for any $b_t$, we have
\begin{align*}
	& \frac{\partial \psi_t}{\partial b_t} = \Delta_{C_t^{-1}(I - B_tC_t^{-1})}\odot \phi'(b_t) \,, \\
	& \frac{\partial^2 \psi_t}{\partial b_t^2} = C_t^{-1}(I - B_tC_t^{-1}) D_{\phi''(b_t)} \odot I  \\
	& \qquad\qquad + 2 C_t^{-1}(B_tC_t^{-1} - I/2) \odot C_t^{-1} \odot \phi'(b_t)\phi'(b_t)^\top \,,
\end{align*}
where $\odot$ is the Hadamard (pointwise) product.
\end{itemize}
\end{lemma}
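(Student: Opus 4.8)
The plan is to read $\psi_t$ as the composition of the smooth matrix map $C\mapsto \log\det C+\Tr(B_tC^{-1})$ with $b_t\mapsto C_t=KP_{t-1\mid t-1}K^\top+f(b_t)$, and to differentiate by the chain rule wherever $C_t$ is invertible and $\phi$ is twice differentiable (i.e. away from the kink of $\phi$ at $0$). The only analytic inputs needed are the two classical identities, valid for any matrix $C$ depending smoothly on a parameter: $\partial\log\det C=\Tr(C^{-1}\,\partial C)$ and $\partial C^{-1}=-C^{-1}(\partial C)C^{-1}$. Since $B_t$ does not depend on $b_t$, the second identity and linearity of the trace give $\partial\Tr(B_tC^{-1})=-\Tr(C_t^{-1}B_tC_t^{-1}\,\partial C)$. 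Summing the two pieces yields the common starting point $\partial\psi_t=\Tr\!\big(C_t^{-1}(I-B_tC_t^{-1})\,\partial C_t\big)$ for both settings, from which everything else follows by substituting the appropriate $\partial C_t$.

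In the scalar setting $f(b_t)=\phi(b_t)I$, so $\partial C_t/\partial b_t=\phi'(b_t)I$ and the formula collapses to $\psi_t'(b_t)=\Tr\!\big(C_t^{-1}(I-B_tC_t^{-1})\big)\phi'(b_t)$, as stated. For the second derivative I would differentiate this product. The factor $\phi'$ contributes the $\phi''$ term; differentiating the trace uses $\partial C_t^{-1}=-\phi'(b_t)C_t^{-2}$. Then $\partial\Tr(C_t^{-1})=-\Tr(C_t^{-2})\phi'$, while $\partial\Tr(C_t^{-1}B_tC_t^{-1})$ produces two terms which, by cyclicity of the trace, both equal $\Tr(C_t^{-2}B_tC_t^{-1})$; hence the $\phi'^2$ coefficient is $2\Tr(C_t^{-2}B_tC_t^{-1})-\Tr(C_t^{-2})=2\Tr\!\big(C_t^{-2}(B_tC_t^{-1}-I/2)\big)$, exactly the claimed expression.

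In the diagonal setting $f(b_t)=D_{\phi(b_t)}$, so $\partial C_t/\partial(b_t)_\ell=\phi'((b_t)_\ell)E_{\ell\ell}$, where $E_{\ell\ell}$ is the elementary matrix with a single $1$ in position $(\ell,\ell)$. Using $\Tr(ME_{\ell\ell})=M_{\ell\ell}$, the general formula gives the $\ell$-th partial derivative as the $(\ell,\ell)$ diagonal entry of $C_t^{-1}(I-B_tC_t^{-1})$ times $\phi'((b_t)_\ell)$, which is precisely $\Delta_{C_t^{-1}(I-B_tC_t^{-1})}\odot\phi'(b_t)$. For the Hessian I would differentiate the $k$-th component $G_{kk}\,\phi'((b_t)_k)$, with $G:=C_t^{-1}(I-B_tC_t^{-1})$, with respect to $(b_t)_\ell$. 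The chain rule acting on $\phi'((b_t)_k)$ yields the diagonal term $G_{kk}\phi''((b_t)_k)\delta_{k\ell}$, which reassembles into $C_t^{-1}(I-B_tC_t^{-1})D_{\phi''(b_t)}\odot I$. Differentiating $G_{kk}$ via $\partial C_t^{-1}/\partial(b_t)_\ell=-\phi'((b_t)_\ell)C_t^{-1}E_{\ell\ell}C_t^{-1}$ reduces each matrix product to a single-entry perturbation, and reading off the $(k,k)$ entry produces scalar products of the form $(C_t^{-1})_{k\ell}(C_t^{-1}B_tC_t^{-1})_{k\ell}$ and $(C_t^{-1})_{k\ell}^2$.

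The delicate point, and the step I expect to require the most care, is this diagonal Hessian bookkeeping: one must track which of the two $C_t^{-1}$ factors in $C_t^{-1}B_tC_t^{-1}$ is differentiated, invoke the symmetry of $C_t^{-1}$, $B_t$ and hence of $C_t^{-1}B_tC_t^{-1}$ to see that the two contributions coincide (accounting for the factor $2$), and finally recognize the entrywise quantity $(C_t^{-1})_{k\ell}\big(2(C_t^{-1}B_tC_t^{-1})_{k\ell}-(C_t^{-1})_{k\ell}\big)\phi'((b_t)_k)\phi'((b_t)_\ell)$ as the $(k,\ell)$ entry of $2\,C_t^{-1}(B_tC_t^{-1}-I/2)\odot C_t^{-1}\odot\phi'(b_t)\phi'(b_t)^\top$. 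The scalar formulas then appear as the degenerate case in which all index sums collapse and the Hadamard products become traces, which serves as a useful consistency check on the diagonal computation.
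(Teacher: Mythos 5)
Your proof is correct, but it follows a genuinely different route from the paper's. You differentiate $\psi_t$ directly by the chain rule, using the classical identities $\partial\log\det C=\Tr(C^{-1}\,\partial C)$ and $\partial C^{-1}=-C^{-1}(\partial C)\,C^{-1}$, and then do entrywise bookkeeping (elementary matrices $E_{\ell\ell}$, symmetry of $C_t^{-1}$ and of $C_t^{-1}B_tC_t^{-1}$ to merge the two Hessian contributions into the factor $2$). The paper instead writes the second-order \emph{series expansions} of $\Tr\Log\big(I+(\phi(b)-\phi(b_t))C_t^{-1}\big)$ and of the Neumann series for $\big(I+(\phi(b)-\phi(b_t))C_t^{-1}\big)^{-1}$, sums them, and identifies the gradient and Hessian from the linear and quadratic Taylor coefficients; in the diagonal case it compresses the index bookkeeping you do by hand into the single identity $\Tr(AD_vBD_v)=v^\top(A\odot B^\top)v$. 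The two computations verify each other: your approach is more elementary in that every step is a standard matrix-calculus rule, at the cost of the delicate $(k,\ell)$-entry tracking you correctly flag; the paper's approach obtains both derivatives simultaneously and handles the Hadamard structure through one known trace identity, at the cost of justifying the matrix expansions. A further small merit of your write-up is that you state explicitly the domain of validity (invertibility of $C_t$ and twice-differentiability of $\phi$ away from its kink at $0$), which the lemma's ``for any $b_t$'' glosses over and which the paper only enforces later via the hypothesis $f(\hat b_{t-1\mid t-1})\succ 0$ in Proposition \ref{prop:upper}.
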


\begin{proof}
\begin{itemize}
\item
In the scalar setting we recall that
\begin{align*}
	\psi_t(b) & = \log\det(KP_{t-1\mid t-1}K^\top + \phi(b) I) \\
	& \qquad + \Tr(B_t(KP_{t-1\mid t-1}K^\top + \phi(b) I)^{-1}) \,.
\end{align*}
We denote by $\log$ and $\exp$ the univariate logarithm and exponential and by $\Log$ the matrix logarithm. Note that if $A\succ 0$, it holds $\det A=\exp\Tr(\Log A)$. We define $C_t=KP_{t-1\mid t-1}K^\top + \phi(b_t)I$ and we obtain:
\begin{align*}
	& \log\det(KP_{t-1\mid t-1}K^\top + \phi(b)I) - \Tr\Log(C_t) \\
	& \quad = \Tr\Log(KP_{t-1\mid t-1}K^\top + \phi(b)I) - \Tr\Log(C_t) \\
	& \quad = \Tr\Log\Big(I + (\phi(b)-\phi(b_t))C_t^{-1}\Big) \\
	& \quad = \Tr\Big(\Big(\phi'(b_t)(b-b_t)+\frac12 \phi''(b_t) (b-b_t)^2\Big)C_t^{-1} \\
	& \qquad\qquad - \frac12 \big(\phi'(b_t)(b-b_t)C_t^{-1}\big)^2 + o((b - b_t)^2) \Big) \,.
\end{align*}
The last line follows from the series expansion of the Logarithm.
We apply another series expansion for the second term of $\psi_t$: we have
\begin{align*}
	& \Tr(B_t (KP_{t-1\mid t-1}K^\top + \phi(b) I)^{-1}) \\
	& \quad = \Tr\Big(B_t C_t^{-1} \Big(I + (\phi(b)-\phi(b_t))C_t^{-1} \Big)^{-1} \Big) \\
	& \quad = \Tr\Big(B_t C_t^{-1} \\
	& \qquad\qquad \Big(I - \Big(\phi'(b_t)(b-b_t)+\frac12 \phi''(b_t) (b-b_t)^2\Big)C_t^{-1} \\
	& \qquad\qquad\quad + \big(\phi'(b_t)(b-b_t)C_t^{-1}\big)^2 + o((b - b_t)^2) \Big) \Big) \,.
\end{align*}

Summing the last two equations, and using the identity $\Tr(AB)=\Tr(BA)$, we can identify the first and second derivatives of $\psi_t$.

\item
We develop a similar argument in the diagonal setting:
\begin{align*}
	\psi_t(b) & = \log\det(KP_{t-1\mid t-1}K^\top+ D_{\phi(b)}) \\
	& \quad + \Tr(B_t (KP_{t-1\mid t-1}K^\top + D_{\phi(b)})^{-1}) \,,
\end{align*}
then we apply the series expansion of the Logarithm:
\begin{align*}
	& \log\det(KP_{t-1\mid t-1}K^\top + D_{\phi(b)}) - \Tr\Log(C_t) \\
	& \quad = \Tr\Log\Big(I + D_{\phi(b) - \phi(b_t)} C_t^{-1}\Big) \\
	& \quad = \Tr\Big(D_{\phi'(b_t)(b-b_t)+\frac12 \phi''(b_t) (b-b_t)^2}C_t^{-1} \\
	& \qquad\qquad - \frac12 (D_{\phi'(b_t)(b-b_t)}C_t^{-1})^2 + o(\|b - b_t\|^2) \Big) \,,
\end{align*}
where $C_t=KP_{t-1\mid t-1}K^\top + D_{\phi(b_t)}$ and $\phi'(b_t),\phi''(b_t)$ denote the coefficient-wise application of the first and second derivatives of $\phi$ to the vector $b_t$.
We apply another series expansion for the second term of $\psi_t$:
\begin{align*}
	& \Tr\Big(B_t (KP_{t-1\mid t-1}K^\top + D_{\phi(b)})^{-1}\Big) \\
	& \quad = \Tr\Big(B_t C_t^{-1} \Big(I + D_{\phi(b)-\phi(b_t)}C_t^{-1} \Big)^{-1} \Big) \\
	& \quad = \Tr\Big(B_t C_t^{-1} \\
	& \qquad\qquad \Big(I - D_{\phi'(b_t)(b-b_t)+\frac12 \phi''(b_t) (b-b_t)^2}C_t^{-1} \\
	& \qquad\qquad\quad + (D_{\phi'(b_t)(b-b_t)}C_t^{-1})^2 + o(\|b - b_t\|^2) \Big) \Big) \,.
\end{align*}

Summing the last two equations we obtain
\begin{align*}
	\psi_t(b) & = \Tr\Log(C_t) + \Tr(B_tC_t^{-1}) \\
	& \quad + \Tr\Big(C_t^{-1}(I - B_tC_t^{-1}) \\
	& \qquad\qquad\qquad D_{\phi'(b_t)(b-b_t)+\frac12 \phi''(b_t) (b-b_t)^2}\Big) \\
	& \quad + \Tr\Big( C_t^{-1}(B_tC_t^{-1} - I/2) \\
	& \qquad\qquad\qquad D_{\phi'(b_t)(b-b_t)} C_t^{-1} D_{\phi'(b_t)(b-b_t)}\Big) \\
	& \quad + o(\|b - b_t\|^2) \,.
\end{align*}
Then we use the identity $Tr(AD_vBD_v)=v^\top (A \odot B^\top)v$. We have
\begin{align*}
	& \psi_t(b) = \Tr\Log(C_t) + \Tr(B_tC_t^{-1}) \\
	& \quad + \frac12 (b-b_t)^\top \Big( C_t^{-1}(I - B_tC_t^{-1}) D_{\phi''(b_t)} \odot I  \\ 
	& \qquad\qquad\qquad\qquad + 2 C_t^{-1}(B_tC_t^{-1} - I/2) \odot \\
	& \qquad\qquad\qquad\qquad\qquad C_t^{-1}\odot \phi'(b_t)\phi'(b_t)^\top\Big) (b-b_t) \\
	& \quad + (\Delta_{C_t^{-1}(I - B_tC_t^{-1})}\odot \phi'(b_t))^\top (b-b_t) + o(\|b - b_t\|^2) \,.
\end{align*}
Thus we can identify the first and second derivatives of $\psi_t$.
\end{itemize}
\end{proof}

\begin{proof}[Proof of Proposition \ref{prop:upper}]
As long as $f(\hat b_{t-1\mid t-1})\succ 0$ we know that $f$ is twice differentiable in $\hat b_{t-1\mid t-1}$ and the local upper-bound property of Proposition \ref{prop:upper} holds if $\frac{\partial^2 \psi_t}{\partial b_t^2}|_{\substack{\hat b_{t-1\mid t-1}}} \prec H_t$. We bound the expressions obtained in Lemma \ref{lemma:derivatives}.
\begin{itemize}
\item
In the scalar setting,
\begin{multline*}
	\psi_t''(\hat b_{t-1\mid t-1}) = \Tr(C_t^{-1}(I - B_tC_t^{-1}))\phi''(\hat b_{t-1\mid t-1}) \\
	+ 2 \Tr(C_t^{-2}(B_tC_t^{-1} - I/2)) \phi'(\hat b_{t-1\mid t-1})^2 \,.
\end{multline*}
Furthermore, $C_t\succ 0$ thus $C_t^{-1}\succ 0$, $\Tr(C_t^{-1})>0$, and $\Tr(C_t^{-2})>0$. $\phi''(\hat b_{t-1\mid t-1})=-1/(1+\hat b_{t-1\mid t-1})^2<0$ and $\phi'(\hat b_{t-1\mid t-1})^2>0$, therefore we obtain
\begin{align*}
	\psi_t''(\hat b_{t-1\mid t-1}) & < -\Tr(C_t^{-1}B_tC_t^{-1})\phi''(\hat b_{t-1\mid t-1}) \\
	& \quad + 2 \Tr(C_t^{-2}B_tC_t^{-1}) \phi'(\hat b_{t-1\mid t-1})^2 \,.
\end{align*}
\item
In the diagonal setting,
\begin{align*}
	\frac{\partial^2 \psi_t}{\partial b_t^2}\Big|_{\substack{\hat b_{t-1\mid t-1}}} & = C_t^{-1}(I - B_tC_t^{-1}) D_{\phi''(\hat b_{t-1\mid t-1})} \odot I  \\
	& \quad + 2 C_t^{-1}(B_tC_t^{-1} - I/2) \odot C_t^{-1} \odot \\
	& \qquad\qquad\qquad \phi'(\hat b_{t-1\mid t-1})\phi'(\hat b_{t-1\mid t-1})^\top \,.
\end{align*}
Similarly we have $C_t^{-1}\succ 0, D_{\phi''(\hat b_{t-1\mid t-1})}\prec 0$ and as diagonal coefficients of $C_t^{-1}$ are positive, it yields $(C_t^{-1}D_{\phi''(\hat b_{t-1\mid t-1})})\odot I \prec 0$.

Moreover $\phi'(\hat b_{t-1\mid t-1})\phi'(\hat b_{t-1\mid t-1})^\top\succ 0$, and we can apply Schur product theorem: $C_t^{-1} \odot C_t^{-1} \odot \phi'(\hat b_{t-1\mid t-1})\phi'(\hat b_{t-1\mid t-1})^\top\succ 0$.
Eventually:
\begin{align*}
	& \frac{\partial^2 \psi_t}{\partial b_t^2}\Big|_{\substack{\hat b_{t-1\mid t-1}}} \prec -C_t^{-1}B_tC_t^{-1} D_{\phi''(\hat b_{t-1\mid t-1})} \odot I  \\
	& \quad + 2 C_t^{-1}B_tC_t^{-1}  \odot C_t^{-1} \odot \phi'(\hat b_{t-1\mid t-1})\phi'(\hat b_{t-1\mid t-1})^\top \,.
\end{align*}
\end{itemize}
\end{proof}

\begin{proof}[Proof of Proposition \ref{prop:optimum_b}]
Thanks to Lemma \ref{lemma:kl_expression} we have:
\begin{align*}
	& KL\Big(\mathcal{N}(\hat\theta_{t\mid t},P_{t\mid t})
	\mathcal{N}(\hat a_{t\mid t},s_{t\mid t})
	\mathcal{N}(\hat b_{t\mid t},\Sigma_{t\mid t})\ ||\ p(\cdot\mid \mathcal{F}_t)\Big) \\
	& \quad = -\frac12 \log\det\Sigma_{t\mid t} + \frac12 \mathbb{E}_{b_t\sim\mathcal{N}(\hat b_{t\mid t},\Sigma_{t\mid t})} [\psi_t(b_t)] \\
	& \qquad + \frac12 \Tr\Big( (\Sigma_{t\mid t} + (\hat b_{t\mid t}-\hat b_{t-1\mid t-1})(\hat b_{t\mid t}-\hat b_{t-1\mid t-1})^\top ) \\
	& \qquad\qquad\qquad\qquad\qquad (\Sigma_{t-1\mid t-1} + \rho_b I)^{-1} \Big) + c_b \,,
\end{align*}
where $c_b$ is a constant independent of $\hat b_{t\mid t}, \Sigma_{t\mid t}$.
Combining the last equation and Proposition \ref{prop:upper}, then using the first two moments of the gaussian distribution we obtain:
\begin{align*}
	& KL\Big(\mathcal{N}(\hat\theta_{t\mid t},P_{t\mid t})
	\mathcal{N}(\hat a_{t\mid t},s_{t\mid t})
	\mathcal{N}(\hat b_{t\mid t},\Sigma_{t\mid t})\ ||\ 
	p(\cdot\mid \mathcal{F}_t)\Big) \\
	& \quad \le -\frac12 \log\det\Sigma_{t\mid t} + \frac12 \psi_t(\hat b_{t-1\mid t-1}) \\
	& \qquad + \frac12 \frac{\partial \psi_t}{\partial b_t}\Big|_{\substack{\hat b_{t-1\mid t-1}}}^\top (\hat b_{t\mid t}-\hat b_{t-1\mid t-1}) \\
	& \qquad + \frac14 \Tr\Big(H_t (\Sigma_{t\mid t} + (\hat b_{t\mid t} - \hat b_{t-1\mid t -1})(\hat b_{t\mid t} - \hat b_{t-1\mid t -1})^\top)\Big) \\
	& \qquad + \frac12 \Tr\Big( (\Sigma_{t\mid t} + (\hat b_{t\mid t}-\hat b_{t-1\mid t-1})(\hat b_{t\mid t}-\hat b_{t-1\mid t-1})^\top ) \\
	& \qquad\qquad\qquad\qquad\qquad (\Sigma_{t-1\mid t-1} + \rho_b I)^{-1} \Big) + c_b \,.
\end{align*}
This yields the upper-bound of Proposition \ref{prop:optimum_b}. The recursive updates follow from the first order conditions:
\begin{align*}
	& - \frac12 \Sigma_{t\mid t}^{-1} + \frac12 \Big((\Sigma_{t-1\mid t-1} + \rho_bI)^{-1} + \frac12 H_t \Big)  = 0 \,, \\
	& \Big((\Sigma_{t-1\mid t-1} + \rho_b I)^{-1} + \frac12 H_t \Big) (\hat b_{t\mid t}-\hat b_{t-1\mid t-1}) \\
	& \qquad + \frac12 \frac{\partial \psi_t}{\partial b_t}\Big|_{\substack{\hat b_{t-1\mid t-1}}} = 0 \,.
\end{align*}
\end{proof}


\bibliographystyle{IEEEtran}
\bibliography{IEEEabrv,mybib}

\begin{thebibliography}{10}
\providecommand{\url}[1]{#1}
\csname url@samestyle\endcsname
\providecommand{\newblock}{\relax}
\providecommand{\bibinfo}[2]{#2}
\providecommand{\BIBentrySTDinterwordspacing}{\spaceskip=0pt\relax}
\providecommand{\BIBentryALTinterwordstretchfactor}{4}
\providecommand{\BIBentryALTinterwordspacing}{\spaceskip=\fontdimen2\font plus
\BIBentryALTinterwordstretchfactor\fontdimen3\font minus
  \fontdimen4\font\relax}
\providecommand{\BIBforeignlanguage}[2]{{%
\expandafter\ifx\csname l@#1\endcsname\relax
\typeout{** WARNING: IEEEtran.bst: No hyphenation pattern has been}%
\typeout{** loaded for the language `#1'. Using the pattern for}%
\typeout{** the default language instead.}%
\else
\language=\csname l@#1\endcsname
\fi
#2}}
\providecommand{\BIBdecl}{\relax}
\BIBdecl

\bibitem{kalman1961new}
R.~E. Kalman and R.~S. Bucy, ``New results in linear filtering and prediction
  theory,'' \emph{Journal of basic engineering}, vol.~83, no.~1, pp. 95--108,
  1961.

\bibitem{brockwell1991time}
P.~J. Brockwell, R.~A. Davis, and S.~E. Fienberg, \emph{Time series: theory and
  methods: theory and methods}.\hskip 1em plus 0.5em minus 0.4em\relax Springer
  Science \& Business Media, 1991.

\bibitem{durbin2012time}
J.~Durbin and S.~J. Koopman, \emph{Time series analysis by state space
  methods}.\hskip 1em plus 0.5em minus 0.4em\relax Oxford university press,
  2012.

\bibitem{mehra1972approaches}
R.~Mehra, ``Approaches to adaptive filtering,'' \emph{IEEE Transactions on
  Automatic Control}, vol.~17, no.~5, pp. 693--698, 1972.

\bibitem{beal2003variational}
M.~J. Beal, \emph{Variational algorithms for approximate Bayesian
  inference}.\hskip 1em plus 0.5em minus 0.4em\relax University of London,
  University College London (United Kingdom), 2003.

\bibitem{vsmidl2006variational}
V.~{\v{S}}m{\'\i}dl and A.~Quinn, \emph{The variational Bayes method in signal
  processing}.\hskip 1em plus 0.5em minus 0.4em\relax Springer Science \&
  Business Media, 2006.

\bibitem{sarkka2009recursive}
S.~Sarkka and A.~Nummenmaa, ``Recursive noise adaptive kalman filtering by
  variational bayesian approximations,'' \emph{IEEE Transactions on Automatic
  Control}, vol.~54, no.~3, pp. 596--600, 2009.

\bibitem{agamennoni2012approximate}
G.~Agamennoni, J.~I. Nieto, and E.~M. Nebot, ``Approximate inference in
  state-space models with heavy-tailed noise,'' \emph{IEEE Transactions on
  Signal Processing}, vol.~60, no.~10, pp. 5024--5037, 2012.

\bibitem{sarkka2013non}
S.~S{\"a}rkk{\"a} and J.~Hartikainen, ``Non-linear noise adaptive kalman
  filtering via variational bayes,'' in \emph{2013 IEEE International Workshop
  on Machine Learning for Signal Processing (MLSP)}.\hskip 1em plus 0.5em minus
  0.4em\relax IEEE, 2013, pp. 1--6.

\bibitem{huang2017novel}
Y.~Huang, Y.~Zhang, Z.~Wu, N.~Li, and J.~Chambers, ``A novel adaptive kalman
  filter with inaccurate process and measurement noise covariance matrices,''
  \emph{IEEE Transactions on Automatic Control}, vol.~63, no.~2, pp. 594--601,
  2017.

\bibitem{huang2020slide}
Y.~Huang, F.~Zhu, G.~Jia, and Y.~Zhang, ``A slide window variational adaptive
  kalman filter,'' \emph{IEEE Transactions on Circuits and Systems II: Express
  Briefs}, vol.~67, no.~12, pp. 3552--3556, 2020.

\bibitem{tzikas2008variational}
D.~G. Tzikas, A.~C. Likas, and N.~P. Galatsanos, ``The variational
  approximation for bayesian inference,'' \emph{IEEE Signal Processing
  Magazine}, vol.~25, no.~6, pp. 131--146, 2008.

\bibitem{tyagi2008recursive}
A.~Tyagi and J.~W. Davis, ``A recursive filter for linear systems on riemannian
  manifolds,'' in \emph{2008 IEEE Conference on Computer Vision and Pattern
  Recognition}.\hskip 1em plus 0.5em minus 0.4em\relax IEEE, 2008, pp. 1--8.

\bibitem{knowles2015stochastic}
D.~A. Knowles, ``Stochastic gradient variational bayes for gamma approximating
  distributions,'' \emph{arXiv preprint arXiv:1509.01631}, 2015.

\bibitem{kingma2014stochastic}
D.~P. Kingma and M.~Welling, ``Stochastic gradient vb and the variational
  auto-encoder,'' in \emph{Second International Conference on Learning
  Representations, ICLR}, vol.~19, 2014, p. 121.

\bibitem{tjandra2015stochastic}
A.~Tjandra, S.~Sakti, S.~Nakamura, and M.~Adriani, ``Stochastic gradient
  variational bayes for deep learning-based asr,'' in \emph{2015 IEEE Workshop
  on Automatic Speech Recognition and Understanding (ASRU)}.\hskip 1em plus
  0.5em minus 0.4em\relax IEEE, 2015, pp. 175--180.

\end{thebibliography}

%








\end{document}